\documentclass[11pt]{article}

\usepackage[preprint]{acl}

\usepackage{times}
\usepackage{latexsym}

\usepackage{amsmath,amssymb,amsfonts}
\usepackage{algorithmic}
\usepackage{graphicx}
\usepackage{textcomp}
\usepackage{xcolor}
\usepackage{multirow} 
\usepackage{booktabs} 
\usepackage{tcolorbox}
\usepackage{colortbl}
\usepackage{amsmath}
\usepackage{tikz}
\usepackage{bm}
\usepackage{makecell}

\usepackage{booktabs}

\usepackage{enumerate}

\usepackage{amsthm}

\newtheorem{theorem}{Theorem}

\usepackage{color}
\usepackage{booktabs}
\usepackage[normalem]{ulem}
\useunder{\uline}{\ul}{}

\usepackage{subcaption}

\usepackage{url}
\usepackage[table]{xcolor}  

\usepackage{multirow}

\usepackage[T1]{fontenc}

\usepackage[utf8]{inputenc}

\usepackage{microtype}

\usepackage{inconsolata}

\usepackage{graphicx}

\title{Learning Invariant Modality Representation for Robust Multimodal Learning from a Causal Inference Perspective}

\author{
 \textbf{Sijie Mai\thanks{Corresponding Author} } \ \ 
 \textbf{Shiqin Han}
\\
 School of Computer Science, South China Normal University
\\
  \texttt{\{sijiemai,20222121019\}@m.scnu.edu.cn} \\
}

\begin{document}
\maketitle
\begin{abstract}

Multimodal affective computing aims to predict humans' sentiment, emotion, intention, and opinion using language, acoustic, and visual modalities. However, current models often learn spurious  correlations that harm generalization under distribution shifts or noisy modalities. To address this, we propose a causal modality-invariant representation (CmIR) learning framework for robust multimodal learning. At its core, we introduce a theoretically grounded disentanglement method that separates each modality into `causal invariant representation' and `environment-specific spurious representation' from a causal inference perspective. CmIR ensures that the learned invariant representations retain stable predictive relationships with labels across different environments while preserving sufficient information from the raw inputs via invariance constraint, mutual information constraint, and reconstruction constraint. Experiments across multiple multimodal benchmarks demonstrate that CmIR achieves state-of-the-art performance. CmIR particularly excels on out‑of‑distribution data and noisy data, confirming its robustness and generalizability. 
\end{abstract}

\section{Introduction}

Multimodal affective computing (MAC) aims to integrate information from language, acoustic, and visual modalities to predict high-level semantics such as sentiment, emotion, intention, and opinion \cite{mai2025learningbycomparing,Poria2017A}. 
Recently, MAC has achieved remarkable progress 
and become increasingly important for applications such as human-computer interaction, customer service automation, and affective computing systems.

\begin{figure}
    \centering
    \includegraphics[width=0.95\linewidth]{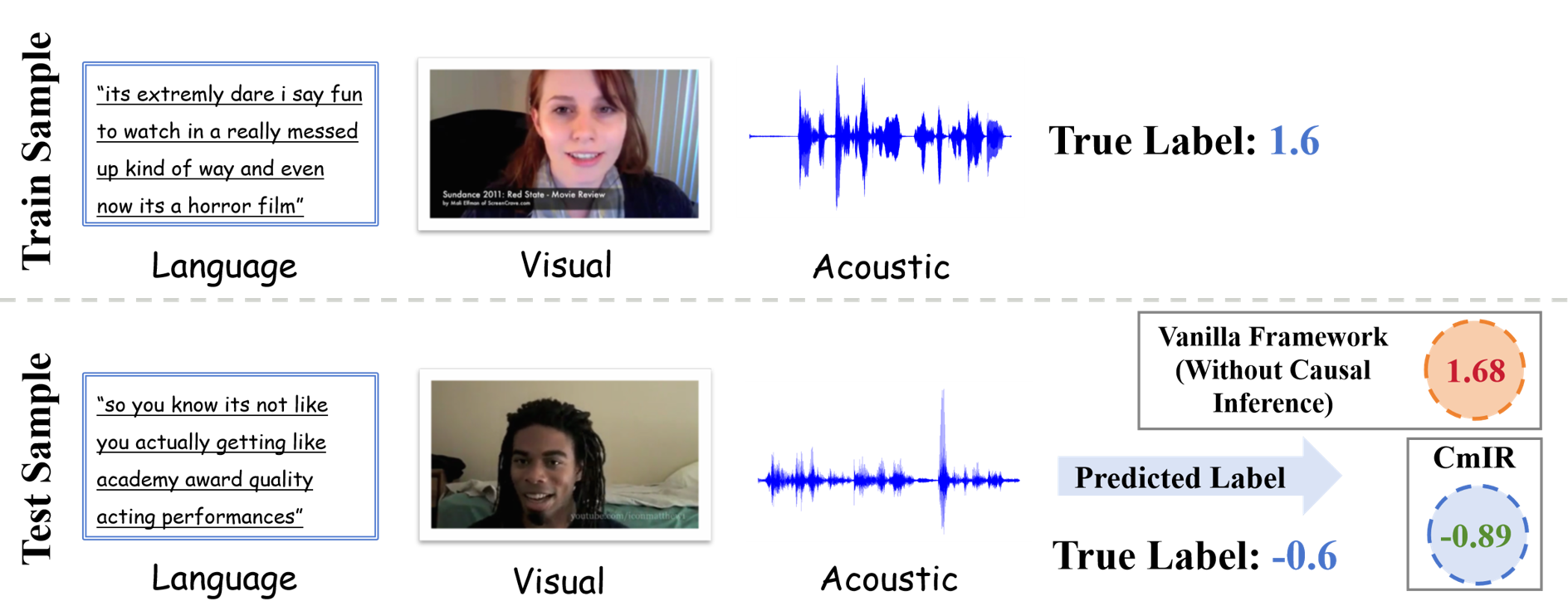}
    \caption{A case study on CMU-MOSI.  Vanilla model without causal inference makes incorrect prediction for the test sample where the speaker delivers negative comment while smiling, while CmIR accurately predicts the label based on correct causal relationships. }
    \label{fig:intro}
    \vspace{-0.2cm}
\end{figure}

Despite remarkable advances, 
existing approaches often learn spurious cross-modal correlations from training data 
rather than genuine causal relationships, harming model generalization when test data distributions differs from training distributions under distribution shifts or noisy modality conditions \cite{zhuang2025tmdc,peters2016causal}. This limitation restricts their practical deployment in real-world scenarios where environmental conditions, speaking styles, lighting conditions, and background contexts constantly vary \cite{arjovsky2019invariant}.
For instance, 
models might over-rely on the speaker’s consistent smile (a spurious visual feature) in training data rather than focusing on semantic content and genuine emotional expressions, leading to performance degradation when tested on different speakers (see Figure~\ref{fig:intro}). Similarly, noisy modalities (e.g., background noise or low-resolution visual frames) further disrupt spurious correlations, exacerbating the generalization gap. 
To address this limitation, some approaches focus on domain adaptation  to learn shared representations across domains \cite{domain1,zhu2025multimodal,domain2}, while others employ disentanglement strategy to separate different factors in multimodal data \cite{MISA,zhuang2025tmdc}. However, these methods lack causal interpretation and cannot guarantee the disentangled/learned features align with causal and spurious components.
Recent works incorporate causal inference principles to identify and eliminate spurious correlations \cite{xu2025debiased,MMCI2025,CLUE2022}. However, they often lack rigorous theoretical analysis or focus on specific biases (such as speaker bias and modality bias) rather than providing a general framework, which rely on prior knowledge or assumptions about the biases and may not generalize well to unseen environments or other types of distribution shifts.

To resolve these issues, we propose a Causal modality-Invariant Representation (CmIR) learning framework for robust multimodal learning. Drawing on causal inference principles \cite{arjovsky2019invariant,peters2016causal}, CmIR is theoretically grounded in causal inference and information theory, with a novel  feature disentanglement method that separates each modality into two complementary components: (1) \textbf{invariant causal representation} (\(Z_m^{inv}\)), which carries stable causal relationships with  labels 
across different environments; and (2) \textbf{environment-specific spurious representation} (\(Z_m^{spu}\)), which captures non-causal, environment-dependent noise and has no causal link to the label. The core insight is that while the distribution of raw features may vary across environments, the causal mechanisms between invariant features and prediction targets remain stable. CmIR is achieved through an elegant objective function that incorporates: (1) an \textbf{invariance constraint} to ensure invariant representations maintain stable relationships with labels across environments; (2) a \textbf{mutual information constraint} to minimize correlations between invariant and spurious components; and (3) a \textbf{reconstruction constraint} to preserve sufficient information from raw inputs. 
These constraints guarantee that invariant representations satisfy critical properties: environmental independence and causal sufficiency for prediction.
We then use invariant modality representations for prediction, ensuring robustness to distribution shifts and noisy modalities.
Compared to previous methods \cite{xu2025debiased,yang2024towards}, CmIR does not rely on specific bias or assumption. It directly learns invariant representations that are stable across all environments, 
which is more general and applicable to various distribution shifts.

Our contributions are summarized as follows:

\begin{itemize}
    \item \textbf{Methodological innovation:} We propose a novel framework name CmIR, which, for the first time, systematically disentangles each modality into causal and spurious components in MAC to comprehensively learn causality-sufficient invariant representations. 
    \item \textbf{Empirical validation:} Extensive experiments on multiple multimodal tasks show that CmIR achieves state-of-the-art results in both standard and \textbf{out-of-distribution} (OOD) benchmarks. Moreover, it exhibits superior robustness under \textbf{noisy modality} testing. 
    \item \textbf{Theoretical guarantees:} 
    We prove the existence and extractability of invariant representations given multi-environment training data. We also show that predictors based on invariant representations achieve lower worst-case OOD risk than those using raw features. 
\end{itemize}



\section{Related Work}
\label{sec:relatedwork}



\subsection{Multimodal Affective Computing}
Most works for MAC center on devising fusion techniques 
to learn discriminative multimodal representations \cite{Zadeh2018Memory,wang2025dlf,Zadeh2017Tensor} or employing techniques such as information bottleneck to regularize unimodal distributions \cite{nll,MIB,luo2025triagedmsa}.
Recently, multimodal large language models have enabled the direct processing of multimodal signals using large pre-trained models, enhancing the interpretation of human affective states \cite{zhao2025humanomnilargevisionspeechlanguage,xu2025qwen2}.
However, these methods often neglect to improve the generalizability of models for OOD data.
To enhance generalization and robustness, some methods focus on domain adaptation to learn shared representations across domains \cite{domain1,zhu2025multimodal,domain2}, while others employ disentangled learning to separate different factors in multimodal data \cite{confede,MISA,zhuang2025tmdc,MFM}. However, these methods lack causal interpretation and cannot guarantee the disentangled/learned features align with causal and spurious components.

\subsection{Causal Inference}

Causal inference can detect and remove non-causal associations in complex datasets to improve model robustness and generalization \cite{wang2022causal,niu2021counterfactual}.
Many causality-based methods have been introduced to reduce cross-modal bias in multimodal learning. 
Researchers employ counterfactual reasoning to refine attention distributions~\cite{huang2025atcaf}, apply front-door and back-door adjustments to decouple spurious links between text and vision~\cite{liu2023cross},  develop counterfactual and debiasing frameworks~\cite{CLUE2022,MulDeF2024,sun2023general}, and design causal intervention modules to separate misleading connections between expressive style and semantic content~\cite{xu2025debiased}. 
However, most methods are restricted to single or specific modality pairs, or they rely on explicitly annotated bias types that require 
domain knowledge~\cite{nam2020learning}. This limitation hinders their broader application to complex multimodal data where biases are often implicit and not predefined.
In contrast, we propose a general invariant representation learning framework without requiring predefined bias types. 
Moreover,
Invariant Risk Minimization \cite{arjovsky2019invariant} and Invariant Causal Mechanism of CLIP \cite{songlearning} that aim to learn invariant features are related to our work, but they focus on single modalities or specific modality combinations in particular application scenarios. In contrast, we provide a more general multimodal causal framework with feature disentanglement to understand and learn the properties of invariant features more comprehensively and accurately.

\section{Theoretical Analysis}
\label{sec:theory}

Here we establish a theoretical foundation for our causal approach to multimodal learning. We first establish the existence and extractability of causal invariant modality representations, then prove their advantages in terms of generalization performance.

\begin{figure}
    \centering
    \includegraphics[width=0.95\linewidth]{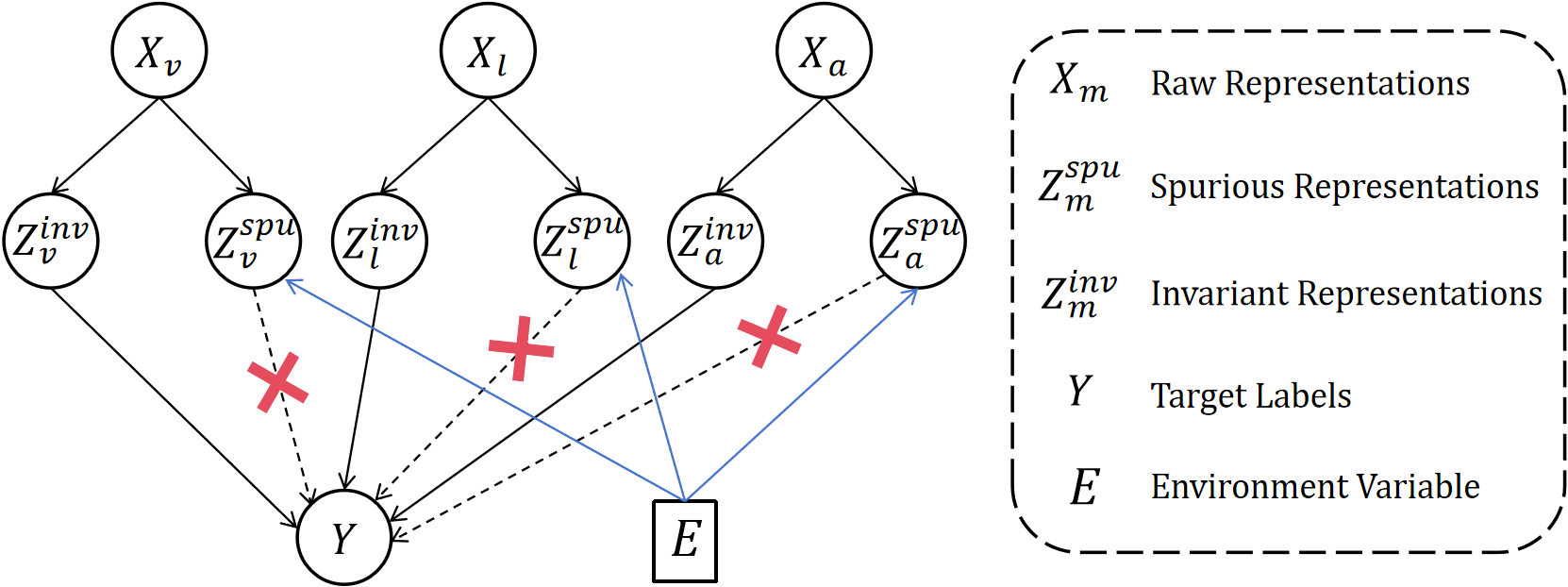}
    \caption{The SCM of CmIR for prediction process. It includes language ($l$), visual ($v$), acoustic ($a$) modalities.}
    \label{fig:causal_relation}
\end{figure}

\subsection{Definition of Invariant Representations}
\label{subsec:invariant_representations}

We begin by formalizing the causal structure of multimodal learning.  Consider $M$ modalities $\{X_m\}_{m=1}^M$ and prediction target $Y$. Following the structural causal model (SCM) framework~\cite{peters2016causal}, we assume the data generation process involves an environment variable $E$ that induces distribution shifts, and each modality $X_m$ can be decomposed into an invariant component $Z_m^{\text{inv}}$ (containing causal features) and a spurious component $Z_m^{\text{spu}}$ (containing environment-specific features). As shown in Figure~\ref{fig:causal_relation}, we assume $Y$ is causally influenced only by invariant components $\{Z_m^{\text{inv}}\}$ and is independent of $E$ and $\{Z_m^{\text{spu}}\}$.

\begin{theorem}[Definition of Causal Invariant Modality Representations]
\label{thm:invariant_representations}
Assume there exists a function class $\Phi_m = \{\phi_m: \mathcal{X}_m \rightarrow \mathcal{Z}_m^{\text{inv}}\}$ and a distribution distance measure $\mathcal{D}$ 
such that the following optimization problem has a solution:
\begin{equation*}
\setlength{\abovedisplayskip}{3pt}
\setlength{\belowdisplayskip}{3pt}
\begin{split}
\phi_m^*\! =\!  \arg\min_{\phi_m \in \Phi_m} \max_{e_1,e_2 \in \mathcal{E}} 
&\mathcal{D}(P(Y|\phi_m(X_m),E\!=\!e_1), \\
&P(Y|\phi_m(X_m),E=e_2))
\end{split}
\end{equation*}
Then $\phi_m^*(X_m)$ constitutes a causal invariant modality representation satisfying:
\begin{equation*}
\setlength{\abovedisplayskip}{3pt}
\setlength{\belowdisplayskip}{3pt}
\begin{split}
P(Y|\phi_m^*(X_m), E\!=\!e_1) =& P(Y|\phi_m^*(X_m), E\!=\!e_2), \\ 
& \forall e_1,e_2 \in \mathcal{E}
\end{split}
\end{equation*}
\end{theorem}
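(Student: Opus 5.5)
The strategy is to show that the minimal value of the min--max objective is exactly zero. The conclusion then follows from the defining property of a distance: $\mathcal{D}(P,Q)=0$ iff $P=Q$. I will assume, as is implicit in the statement, that $\mathcal{D}$ is nonnegative and separates distributions. Examples are total variation, or any divergence such as KL, for which $\mathcal{D}(P,Q)=0$ implies $P=Q$.

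\textbf{Step 1 (a zero-value candidate).} Use the SCM of Figure~\ref{fig:causal_relation}: $Y$ depends only on $\{Z_m^{\text{inv}}\}$ and is independent of $E$ and of the spurious parts. I would take the projection $\bar\phi_m: X_m \mapsto Z_m^{\text{inv}}$ and assume it lies in $\Phi_m$. Under the SCM, the structural equation for $Y$ does not involve $E$. So $Y \perp E \mid Z_m^{\text{inv}}$, and hence
\[
P(Y\mid \bar\phi_m(X_m),E=e_1)=P(Y\mid \bar\phi_m(X_m),E=e_2)
\]
for all $e_1,e_2\in\mathcal{E}$. Strictly, with several modalities $Y$ depends on all invariant parts jointly. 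In that case I would either read the assumption per modality, taking the other modalities' invariant parts to be independent of $E$ given $Z_m^{\text{inv}}$, or note that the statement only needs the premise that such a minimizer exists and attains value $0$.

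Every pairwise distance for $\bar\phi_m$ is then $0$, so the inner maximum is $0$. Since $\mathcal{D}\ge 0$, the objective is bounded below by $0$, so the minimum value is $0$.

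\textbf{Step 2 (transfer to the minimizer).} Since $\phi_m^*$ is an $\arg\min$, its objective value is at most that of $\bar\phi_m$, which is $0$. By nonnegativity it therefore equals $0$. So $\max_{e_1,e_2}\mathcal{D}(\cdot,\cdot)=0$, which forces every pair to have distance $0$. By the identity-of-indiscernibles property, the conditional distributions coincide for all $e_1,e_2$. This is exactly the stated equality, understood almost surely in $\phi_m^*(X_m)$ on the common support.

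\textbf{Main obstacle.} The delicate step is Step 1: showing the optimum value is exactly zero rather than merely minimal. This needs realizability ($\bar\phi_m\in\Phi_m$) and a precise reading of the SCM giving $Y\perp E\mid \phi(X_m)$ for some $\phi$. I would state realizability as an explicit assumption. I would also handle support mismatch across environments by restricting the comparison to the overlap of supports of $\phi_m^*(X_m)$ under $e_1$ and $e_2$. If these assumptions are unavailable, the fallback is a weaker conclusion: $\phi_m^*$ attains the smallest possible worst-case discrepancy, with equality to zero whenever any invariant representation exists in $\Phi_m$.
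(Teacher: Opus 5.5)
Your proposal is correct and follows essentially the same route as the paper. The paper's Step 2 likewise concludes from $\mathcal{D}(P_1,P_2)=0 \iff P_1=P_2$ that the minimizer has zero pairwise discrepancy; its Step 1 (causal features have environment-invariant conditionals) plays the role of your zero-value candidate, and its Step 3 adds only a causal interpretation. Your version is more careful, because you make explicit the realizability assumption (some zero-discrepancy map lies in $\Phi_m$) that the paper uses tacitly and without which the minimum need not be zero; if $\Phi_m$ contains constant maps, the paper's SCM assumption $Y \perp\!\!\!\perp E$ already supplies such a witness and sidesteps your multi-modality caveat.
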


\begin{proof} 
See Section~\ref{subsec:invariant_representations_a} for the proof.
\end{proof}

Theorem~\ref{thm:invariant_representations} suggests that $\phi_m^*(X_m)$ is a valid invariant modality representation capturing only causal features for robust prediction. 
If we can learn a representation $\phi(X_m)$ such that the conditional distribution of the label $Y$ given this representation is the same across all environments (i.e., $P(Y|\phi(X_m),E=e)$ does not depend on $e$), then this representation must capture only the causal features (i.e., causal features exist). Intuitively, causal relationships between features and labels are invariant under changes of the environment. If the prediction rule based on $\phi(X_m)$ remains unchanged when the environment varies, it means that $\phi(X_m)$ does not contain any environment-specific spurious information. In other words, it 
blocks all backdoor paths from environment $E$ to label $Y$. Thus, the invariance condition is a signature of causality.

\subsection{Extraction of Invariant Representations}
\label{subsec:disentangled_representations}

While Theorem~\ref{thm:invariant_representations} establishes the definition of invariant representations, practical implementation requires extracting these representations from raw modalities while preserving all relevant information. This motivates our disentanglement approach.

\begin{theorem}[Theoretical Guarantee for Extracting Disentangled Representations]
\label{thm:disentangled_representations}
Consider encoder functions $g_m: \mathcal{X}_m \rightarrow (\mathcal{Z}_m^{\text{inv}}, \mathcal{Z}_m^{\text{spu}})$ and decoder functions $r_m: (\mathcal{Z}_m^{\text{inv}}, \mathcal{Z}_m^{\text{spu}}) \rightarrow \mathcal{X}_m$ that optimize the following objective:
\begin{align*}
\setlength{\abovedisplayskip}{3pt}
\setlength{\belowdisplayskip}{3pt}
&\min_{\{g_m,r_m,h\}_{m=1}^M} \mathbb{E}_{e\in\mathcal{E}}[\mathcal{L}_{\text{pred}}(Y, h(\{Z_m^{\text{inv}}\}_{m=1}^M))] \\
&+ \lambda_1 \sum_{m=1}^M \mathcal{R}_{\text{inv}}^{(m)} 
+ \lambda_2 \sum_{m=1}^M \mathcal{R}_{\text{dec}}^{(m)} 
+ \lambda_3 \sum_{m=1}^M \mathcal{R}_{\text{rec}}^{(m)}
\end{align*}
where $\mathcal{L}_{\text{pred}}$ is the task prediction loss, $h$ is the prediction head, $\mathcal{R}_{\text{inv}}^{(m)} = \sum_{e_1 \in \mathcal{E}} \sum_{e_2 \in \mathcal{E}} 
    \mathcal{D}(P(Y|Z_m^{\text{inv}},E=e_1), 
    P(Y|Z_m^{\text{inv}},E=e_2))$ enforces invariance,
$\mathcal{R}_{\text{dec}}^{(m)} = I(Z_m^{\text{inv}}; Z_m^{\text{spu}})$ minimizes mutual information between invariant and spurious components,
$\mathcal{R}_{\text{rec}}^{(m)} = \|X_m - r_m(Z_m^{\text{inv}}, Z_m^{\text{spu}})\|^2$ ensures reconstruction capability,
and $\lambda_1,\lambda_2,\lambda_3 > 0$ are hyperparameters.
Assuming the label $Y$ is independent of environment $E$, the function classes $\{g_m,r_m,h\}$ have sufficient capacity and the data follows the SCM described in Section~\ref{subsec:invariant_representations}, then as $\lambda_1,\lambda_2,\lambda_3 \rightarrow \infty$, the optimal solution satisfies:
\begin{enumerate}
\item \(\displaystyle \lim_{\lambda_3\to\infty} \mathbb{E}\bigl[\|X_m - r_m(Z_m^{\mathrm{inv}},Z_m^{\mathrm{spu}})\|^2\bigr] = 0\) (perfect reconstruction is achieved)
    \item $Z_m^{\text{inv}} \perp\!\!\!\perp E$ (invariant component is environment-independent)
    \item $I(Y; Z_m^{\text{spu}} | Z_m^{\text{inv}}, E) = 0$ (spurious component contains no additional causal information)
\end{enumerate}
\end{theorem}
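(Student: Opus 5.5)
The proof is a penalty-method argument. First I will show that the feasible set is nonempty. Then I will argue that, as each $\lambda_i\to\infty$, every minimizer must drive the corresponding regularizer to zero. Finally I will translate the vanishing regularizers into the three stated properties using the SCM of Section~\ref{subsec:invariant_representations}.

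\textbf{Step 1 (a zero-penalty witness).} The SCM states that each $X_m$ is generated from $(Z_m^{\text{inv}},Z_m^{\text{spu}})$. By the sufficient-capacity assumption, we can take $g_m$ to recover the true pair of components and $r_m$ to be the generating map, so that $\mathcal{R}_{\text{rec}}^{(m)}=0$. I will need to assume the two true components are independent so that $\mathcal{R}_{\text{dec}}^{(m)}=0$; I read this as part of the SCM in Figure~\ref{fig:causal_relation}. Because $Y$ depends only on $\{Z_m^{\text{inv}}\}$, we have $P(Y|Z_m^{\text{inv}},E=e)$ constant in $e$, hence $\mathcal{R}_{\text{inv}}^{(m)}=0$. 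Call this witness $\theta^\dagger$. Its objective value is a finite prediction loss $L^\dagger$ that does not depend on the $\lambda$'s.

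\textbf{Step 2 (penalties vanish at the optimum).} Let $\theta_\lambda$ be any optimal solution. Since each regularizer is nonnegative and $\mathcal{L}_{\text{pred}}\ge 0$, optimality gives $\lambda_i\,\mathcal{R}_i(\theta_\lambda)\le L^\dagger$, that is, $\mathcal{R}_i(\theta_\lambda)\le L^\dagger/\lambda_i\to 0$ for each $i$. Item 1 follows immediately from the bound on $\mathcal{R}_{\text{rec}}$. From the other two bounds, $\mathcal{R}_{\text{dec}}\to 0$ and $\mathcal{R}_{\text{inv}}\to 0$ in the limit. By Theorem~\ref{thm:invariant_representations}, the limiting $Z_m^{\text{inv}}$ then satisfies $P(Y|Z_m^{\text{inv}},E=e_1)=P(Y|Z_m^{\text{inv}},E=e_2)$, which I will abbreviate as $Y\perp\!\!\!\perp E\mid Z_m^{\text{inv}}$.

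\textbf{Step 3 (items 2 and 3).} This is where the real work lies, and I expect item 2 to be the main obstacle. Invariance of $P(Y|Z^{\text{inv}},E)$ together with $Y\perp\!\!\!\perp E$ does not by itself force the marginal of $Z^{\text{inv}}$ to be free of $E$.

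My first attempt at item 2 uses the decorrelation term together with the SCM. In the SCM, $E$ influences $X_m$ only through $Z_m^{\text{spu}}$. Perfect reconstruction makes $(Z_m^{\text{inv}},Z_m^{\text{spu}})$ an information-preserving code of $X_m$. Any $E$-dependence in $Z_m^{\text{inv}}$ must then be shared with the environment-driven part, which would make $I(Z_m^{\text{inv}};Z_m^{\text{spu}})>0$ and contradict $\mathcal{R}_{\text{dec}}=0$. Making this rigorous needs an identifiability-type assumption: $E$-variation is captured by $Z^{\text{spu}}$, and the code is a bijection of the true factors. I will state this assumption explicitly.

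If that route stalls, the fallback is to combine the prediction loss with $Y\perp\!\!\!\perp E$. Minimizing $\mathcal{L}_{\text{pred}}$ subject to invariance selects the causal factor, which by the SCM is $E$-independent.

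For item 3, I expand $I(Y;Z_m^{\text{spu}}\mid Z_m^{\text{inv}},E)$ with the chain rule. I then use that, in the SCM, $Y$ is determined by the invariant components, so $Y\perp\!\!\!\perp Z_m^{\text{spu}}\mid Z_m^{\text{inv}},E$ once $Z_m^{\text{inv}}$ matches the causal factor identified above; the conditional mutual information is therefore zero. A further point needs care: in the multimodal setting, $Y$ depends on the invariant components of all modalities. So I will either condition on the other modalities' invariant parts or use their independence from $Z_m^{\text{spu}}$ in the SCM.
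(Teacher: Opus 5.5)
Your Steps 1--2 are sound, and they are sharper than the paper's Part 1, which only asserts that each penalty must vanish ``provided the optimal loss remains bounded''. Your zero-penalty witness gives the explicit bound $\mathcal{R}_i(\theta_\lambda)\le L^\dagger/\lambda_i$, and hence item 1. You still need a compactness/lower-semicontinuity argument to pass from $\mathcal{R}_i(\theta_\lambda)\to 0$ to a limiting representation with zero penalty.

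The gap is Step 3, and it is not a matter of missing detail: neither of your routes can work from the stated hypotheses. Write $(C_m,S_m)$ for the true SCM pair with $X_m=f_m(C_m,S_m)$ injective, as in your witness. Now take the degenerate code $g_m(x)=(x,0)$, $r_m(z,s)=z$, so $Z_m^{\text{inv}}=X_m$ and $Z_m^{\text{spu}}$ is constant.
\begin{itemize}
\item $\mathcal{R}_{\text{rec}}^{(m)}=0$ and $\mathcal{R}_{\text{dec}}^{(m)}=0$ trivially.
\item $\mathcal{R}_{\text{inv}}^{(m)}=0$, because $P(Y\mid X_m,E)=P(Y\mid C_m,S_m,E)=P(Y\mid C_m)$ by the same SCM facts you use in Step 1 and for item 3.
\item Any $h(\{Z_k^{\text{inv}}\}_k)$ is a function of $X$, so this code attains the Bayes prediction loss. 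It is therefore a global minimizer for every $\lambda$.
\end{itemize}
Yet $Z_m^{\text{inv}}=X_m$ depends on $E$ as soon as $P(X_m\mid E=e)$ varies with $e$, which is the premise of the whole setting.

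This example defeats both of your routes. The $E$-dependent part need not be ``shared'' with $Z_m^{\text{spu}}$; it can simply be absorbed into $Z_m^{\text{inv}}$. And the prediction loss does not ``select the causal factor'', since it is indifferent between $C_m$ and all of $X_m$. To exclude this example, your identifiability assumption would have to be component-wise ($Z_m^{\text{inv}}$ a function of $C_m$ alone), and then it is essentially item 2 itself. What is genuinely missing is a hypothesis that forbids $Z_m^{\text{inv}}$ from carrying $E$-dependent information, such as a minimality or bottleneck condition.

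Item 3 inherits this problem, since your argument starts from ``once $Z_m^{\text{inv}}$ matches the causal factor''. There is also a multimodal obstruction beyond the conditioning issue you mention. The prediction loss acts only on the joint head, so it puts no pressure on a modality whose causal content is redundant. Suppose $Y\perp\!\!\!\perp C_m\mid \{C_k\}_{k\neq m}$, and consider the swapped code $Z_m^{\text{inv}}=S_m$, $Z_m^{\text{spu}}=C_m$.
\begin{itemize}
\item Reconstruction and decorrelation penalties are zero, using your assumption $C_m\perp\!\!\!\perp S_m$.
\item The invariance penalty is zero, since $P(Y\mid S_m,E)=P(Y)$ under the SCM.
\item The joint prediction loss is optimal.
\end{itemize}
Nevertheless, $I(Y;Z_m^{\text{spu}}\mid Z_m^{\text{inv}},E)=I(Y;C_m)>0$ whenever $C_m$ is informative about $Y$.

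For comparison, the paper proves item 2 by contradiction through $P(Y\mid E=e_i)=\int P(Y\mid z)\,dP(z\mid E=e_i)$. It asserts that distinct mixing measures must give distinct mixtures, which is exactly the implication you correctly flagged as invalid. It proves item 3 through an unproved ``richness'' claim about the family $\{P(Z_m^{\text{spu}}\mid Z_m^{\text{inv}},E=e)\}_e$. The two examples above show that both conclusions can fail at a global optimum, so following the paper's route would not close your gap either.
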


\begin{proof}
See Section~\ref{subsec:disentangled_representations_a} for the proof.
\end{proof}

Theorem~\ref{thm:disentangled_representations} suggests that causal invariant representations can be learned using our CmIR.
The invariance constraint forces $Z^{inv}$ to have the same predictive relationship with $Y$ across environments, making it environment-independent (capturing only causal features). The reconstruction loss ensures that the pair $(Z^{inv}, Z^{spu})$ retains all information from the original input, preventing information loss and avoiding degenerate decomposition solutions. The mutual information minimization pushes $Z^{inv}$ and $Z^{spu}$ to be statistically independent, so that $Z^{spu}$ cannot carry any causal information about $Y$ that is already in $Z^{inv}$. Reconstruction loss and mutual information minimization together force $Z^{inv}$ to contain all causal information in the original input. These three constraints lead to a clean decomposition: $Z^{inv}$ contains only causal factors and encompasses all causal information from the original input, and $Z^{spu}$ contains only environment-specific noise.

\subsection{Distributionally Robust Risk Advantage of Invariant Representations}
\label{thm3}

Having established how to learn invariant representations, we now prove their theoretical advantages for worst-case OOD risk under distribution shift.

\begin{theorem}[Distributionally Robust Risk Advantage of Invariant Representations]
\label{thm:ood_risk_advantage}
Let $\mathcal{H}$ be a hypothesis class over $\mathcal{X} \times \mathcal{Y}$, where $\mathcal{X} = \mathcal{X}_1 \times \mathcal{X}_2 \times \dots \times \mathcal{X}_M$ is the $M$-modal feature space and $\mathcal{Y}$ is the label space. Let $\mathcal{E}_{\text{all}}$ denote the set of all possible environments, each corresponding to a distribution $P^e(x, y)$. Let $h_{\text{inv}} \in \mathcal{H}$ be a predictor using invariant representations $Z^{inv} = \{Z_m^{\text{inv}}\}_{m=1}^M$, and $h_{\text{raw}} \in \mathcal{H}$ be a predictor using raw multimodal representations $X = \{X_m\}_{m=1}^M$. Assume:

1. \textbf{Invariance Condition}: The invariant representations satisfy $P^e(Y|Z^{\text{inv}}) = P^{e'}(Y|Z^{\text{inv}})$ for all $e,e' \in \mathcal{E}_{\text{all}}$.

2. \textbf{Information Sufficiency}: The mutual information between invariant representations and raw features satisfies $I(Z^{\text{inv}}; X) > c$ for some constant $c > 0$ ($Z^{\text{inv}}$ contains enough information from $X$).

3. \textbf{Loss Function Regularity}: The loss function $\ell$ is $L$-Lipschitz continuous and bounded.

Then the worst-case OOD risk satisfies:
\begin{equation*}
R^{\text{OOD}}(h_{\text{inv}}) < R^{\text{OOD}}(h_{\text{raw}})
\end{equation*}
where $R^{\text{OOD}}(h) = \max_{e \in \mathcal{E}_{\text{test}}} R^e(h)$ and $R^e(h) = \mathbb{E}_{(x,y)\sim P^e}[\ell(h(x), y)]$.
\end{theorem}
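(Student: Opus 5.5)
The plan is to read $h_{\text{inv}}$ as the Bayes-optimal invariant predictor $h_{\text{inv}}(x)=f^*(Z^{\text{inv}}(x))$, with $f^*$ chosen from the common conditional $P(Y\mid Z^{\text{inv}})$. I will compare it against the worst environment for $h_{\text{raw}}$. By the Invariance Condition, $f^*$ is the same in every environment. Hence for each $e\in\mathcal{E}_{\text{test}}$,
\begin{equation*}
R^e(h_{\text{inv}})=\mathbb{E}_{Z^{\text{inv}}\sim P^e}\bigl[\,\mathbb{E}[\ell(f^*(Z^{\text{inv}}),Y)\mid Z^{\text{inv}}]\,\bigr].
\end{equation*}
The inner conditional risk $\rho(z)$ does not depend on $e$. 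So $R^e(h_{\text{inv}})$ can only vary through the marginal of $Z^{\text{inv}}$. If the theorem's implicit SCM assumption from Section~\ref{subsec:invariant_representations} is used, the marginal is also environment-free, because Theorem~\ref{thm:disentangled_representations} gives $Z_m^{\text{inv}}\perp\!\!\!\perp E$. In that case $R^e(h_{\text{inv}})=R^\star$ is constant across $e$, and the worst case equals the average. Boundedness of $\ell$ makes all these expectations finite.

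Next I will lower bound the worst-case risk of $h_{\text{raw}}$. Since $X$ contains $Z^{\text{spu}}$, and $Z^{\text{spu}}$ carries environment-dependent correlation with $Y$, I will build an adversarial test environment $e^\dagger$ via an intervention on $Z^{\text{spu}}$. In $e^\dagger$ the spurious correlation exploited by $h_{\text{raw}}$ is reversed, while the causal mechanism $Y\leftarrow Z^{\text{inv}}$ is kept fixed. The target bound is
\begin{equation*}
R^{e^\dagger}(h_{\text{raw}})\ \ge\ \mathbb{E}\,\rho_{e^\dagger}^{\min}(X)+\Delta,
\end{equation*}
where $\Delta>0$ whenever $h_{\text{raw}}$ depends non-trivially on $Z^{\text{spu}}$. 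Here I use the Lipschitz constant $L$ to convert a distributional perturbation of $Z^{\text{spu}}$ into a controlled change in loss. Two cases follow. If $h_{\text{raw}}$ ignores $Z^{\text{spu}}$, it is a function of the causal information only, and its risk is at least $R^\star$ by optimality of $f^*$. The information sufficiency $I(Z^{\text{inv}};X)>c$ guarantees that $Z^{\text{inv}}$ is non-degenerate, so $f^*$ is genuinely the best such predictor. If $h_{\text{raw}}$ does use $Z^{\text{spu}}$, then $e^\dagger$ yields a strict excess risk over $R^\star$.

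The main obstacle is \emph{strictness}. The raw predictor could in principle coincide with $h_{\text{inv}}$, which would give equality. I will therefore interpret $h_{\text{raw}}$ as the ERM/Bayes predictor on the training environments. Such a predictor generically places nonzero weight on $Z^{\text{spu}}$ whenever $I(Y;Z^{\text{spu}}\mid Z^{\text{inv}},E=e)>0$ for some training $e$. I will state this as the nondegeneracy condition that makes $\Delta>0$.

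I also need $\mathcal{E}_{\text{test}}$ to be rich enough to contain $e^\dagger$, for instance by taking $\mathcal{E}_{\text{test}}\subseteq\mathcal{E}_{\text{all}}$ to be closed under interventions on spurious components. Combining the two steps gives
\begin{equation*}
R^{\text{OOD}}(h_{\text{inv}})=R^\star<R^{\text{OOD}}(h_{\text{raw}}).
\end{equation*}
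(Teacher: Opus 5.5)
You are right that the statement cannot hold verbatim. Take $h_{\text{raw}}=h_{\text{inv}}$, which is itself a map on $\mathcal{X}$, or a constant loss, which is Lipschitz and bounded; either gives equality. The real problem is that the step meant to deliver strictness fails as written. Your $\Delta$ is an excess over $\mathbb{E}\,\rho^{\min}_{e^\dagger}(X)$, the best risk attainable from $X$ \emph{in} $e^\dagger$, not over $R^\star$. If $e^\dagger$ reverses the spurious correlation, $Z^{\text{spu}}$ is again informative about $Y$ given $Z^{\text{inv}}$ there. So $\mathbb{E}\,\rho^{\min}_{e^\dagger}(X)$ can lie strictly below $R^\star$, and $\Delta>0$ does not yield $R^{e^\dagger}(h_{\text{raw}})>R^\star$. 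The Lipschitz constant cannot close this: it only bounds loss changes from above and never gives a positive lower bound.

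The repair is to let $e^\dagger$ \emph{decouple} $Z^{\text{spu}}$ from $Y$ given $Z^{\text{inv}}$ (e.g.\ $Z^{\text{spu}}=s(Z^{\text{inv}})$ with $s$ adversarial) while keeping the joint law of $(Z^{\text{inv}},Y)$. Conditioning on $Z^{\text{inv}}$ then gives $R^{e^\dagger}(h_{\text{raw}})\ge R^\star$ outright. The inequality is strict as soon as $h_{\text{raw}}$ fails, on a set of positive $P^{e^\dagger}$-mass, to minimize $a\mapsto\mathbb{E}[\ell(a,Y)\mid Z^{\text{inv}}]$. That, rather than ``depends non-trivially on $Z^{\text{spu}}$'', is the hypothesis to state. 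Under absolute loss, for instance, $h_{\text{raw}}$ can pick different conditional medians according to $Z^{\text{spu}}$ at zero excess cost in every decoupled environment.

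The hypotheses you import are also not mutually consistent. Theorem~2 gives $Z_m^{\text{inv}}\perp\!\!\!\perp E$ only modality by modality and only over the training environments. Constancy of $R^e(h_{\text{inv}})$ needs the joint law of $(Z_1^{\text{inv}},\dots,Z_M^{\text{inv}})$ fixed across all of $\mathcal{E}_{\text{test}}$, including your constructed $e^\dagger$, so this must simply be assumed. More seriously, your nondegeneracy condition $I(Y;Z^{\text{spu}}\mid Z^{\text{inv}},E=e)>0$ contradicts conclusion~3 of that same Theorem~2 when $M=1$. It is also at odds with the SCM, in which $Y$ is independent of the spurious components. Whenever $Z^{\text{spu}}$ carries no information about $Y$ beyond $Z^{\text{inv}}$ in training, the pooled Bayes/ERM raw predictor can be taken to be a function of $Z^{\text{inv}}$ alone, and equality holds. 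Even granting the condition, positive conditional mutual information need not change the Bayes act: under squared loss, $Z^{\text{spu}}$ might shift only the conditional variance. So ``generically'' is doing the real work.

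For comparison, the paper does not obtain strictness from the listed assumptions either. It bounds $R^e(h^*_{\text{inv}})\le R^e(h^*_{\text{raw}})+L\,\epsilon(c)$ via an information-retention step. It then asserts that $R^e(h_{\text{inv}})$ is constant from conditional invariance alone, which is exactly the marginal-shift issue you flagged. Using the spread $\sigma$ of $R^e(h_{\text{raw}})$ over $\mathcal{E}_{\text{test}}$, it gets $R^{\text{OOD}}(h_{\text{raw}})-R^{\text{OOD}}(h_{\text{inv}})\ge\sigma/|\mathcal{E}_{\text{test}}|-L\,\epsilon(c)-\delta_{\text{inv}}$. It concludes only under the extra condition $\sigma>|\mathcal{E}_{\text{test}}|\,(L\,\epsilon(c)+\delta_{\text{inv}})$.

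Once repaired, your minimax-over-interventions route is cleaner. It needs no variability condition, and information sufficiency plays no role (it does not make $f^*$ any more optimal than it already is by construction). But like the paper's argument, it proves a modified theorem, so it should list its added hypotheses explicitly:
\begin{itemize}
\item $h_{\text{inv}}$ is Bayes-optimal;
\item the law of $(Z^{\text{inv}},Y)$ is invariant on $\mathcal{E}_{\text{test}}$;
\item a decoupling environment belongs to $\mathcal{E}_{\text{test}}$;
\item $h_{\text{raw}}$ is non-optimal in that environment.
\end{itemize}
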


\begin{proof}
See Section~\ref{thm3_a} for the proof.
\end{proof}

Theorem~\ref{thm:ood_risk_advantage} proves that under realistic conditions, predictors based on invariant representations achieve strictly lower worst-case out-of-distribution risk than those using raw features. The intuition is straightforward: raw features contain both causal and spurious parts. The spurious part may change arbitrarily in new environments, causing large errors in the worst-case scenario. In contrast, the invariant representation relies only on the stable causal mechanism, which remains unchanged across environments, thereby guaranteeing more reliable performance even under the most adverse distribution shifts.

\begin{figure*}
    \centering
    \includegraphics[width=0.95\linewidth]{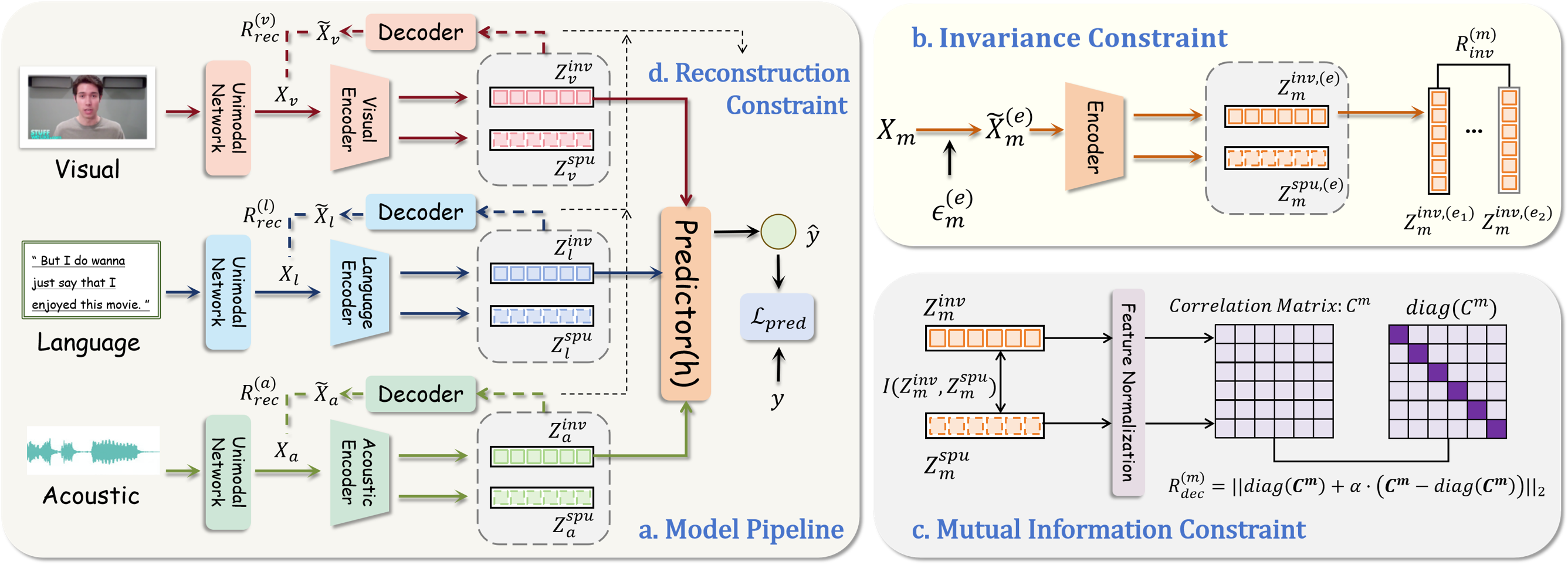}
    \caption{The overall framework of CmIR and the visualization of the proposed constraints.}
    \label{fig:framework}
    \vspace{-0.1cm}
\end{figure*}

\section{Algorithm Implementation}

Here we elaborate on the implementation of CmIR proposed in Theorem~\ref{thm:disentangled_representations}. Our objective is to learn the disentangled modality representations, 
and perform prediction by fusing invariant representations $Z_m^{\text{inv}}$. The overall framework (see Figure~\ref{fig:framework}) consists of unimodal networks that produces raw unimodal features $X_m \in \mathbb{R}^{1\times d} $ (see Appendix~\ref{unimodal_a} for unimodal networks), encoders $g_m$, decoders $r_m$, and a prediction head (predictor) $h$. 
Next, we detail the implementation of each loss component.

\subsection{Prediction Loss $\mathcal{L}_{\text{pred}}$}

The prediction loss ensures that invariant representations effectively predict the target label $Y$. 
Given a batch of data, the prediction loss is computed as:
\begin{equation}
Z_{m}^{\text{inv}},\ \ Z_{m}^{\text{spu}} = g_m(X_m)
\end{equation}
\begin{equation}
\mathcal{L}_{\text{pred}} = \frac{1}{N} \sum_{i=1}^{N} \ell\left(h\left(\{Z_{m,i}^{\text{inv}}\}_{m=1}^{M}\right), Y_i\right)
\end{equation}
where $g_m$ is the encoder for modality $m$, 
$N$ is the batch size,  and $\ell$ is the corresponding loss function.
For classification tasks (e.g., humor detection and sarcasm detection), cross-entropy loss is utilized. For regression tasks (e.g., sentiment analysis), mean squared error (MSE) or mean absolute error (MAE) is used.
The predictor is implemented using unimodal feature concatenation and a few multi-layer perception layers (see Figure~\ref{structure}).

\subsection{The Invariance Constraint $\mathcal{R}_{\text{inv}}^{(m)}$}
\label{sec:invariant}
The invariance constraint requires that $P(Y|Z_m^{\text{inv}}, E)$ remains invariant across different environments. However, real-world data often lack explicit environment labels $E$. To address this, we draw inspiration from data augmentation and simulate different \textbf{virtual environments} by injecting varying degrees of noise into the raw features $X_m$.
For each sample, we assign a random virtual environment label $e \in \{1, 2, ..., K\}$, where $K$ is the number of environments (a hyperparameter, see Table~\ref{t2223}). Then we perform \textbf{noise perturbation} via applying an additive noise $\epsilon_m^{(e)} = \alpha^{(e)} \cdot \epsilon_m$ to $X_m$ based on the environment label $e$:
    \begin{equation}
    \label{eq_inva}
    \tilde{X}_m^{(e)} = X_m + \alpha^{(e)} \cdot \epsilon_m, \quad \epsilon_m \sim \mathcal{N}(0, \Sigma_m)
    \end{equation}
    where $\alpha^{(e)}$ is an environment-dependent coefficient controlling the noise intensity, and $\Sigma_m$ is the modality-specific covariance matrix (which can be set as the identity matrix or estimated from the data). The noise coefficient $\alpha^{(e)}$ is different for different environments, which is defined as $\alpha^{(e)} = \alpha^{(1)} * e, e \in \{1, 2, ..., K \} $. $ \alpha^{(1)}$ is the noise coefficient for Environment $1$, which is a hyperparameter whose values are shown in Table~\ref{t2223}.
    The perturbed feature $\tilde{X}_m^{(e)}$ is fed into the encoder $g_m$ to obtain the invariant representation $Z_m^{\text{inv},(e)}$ for that environment.
    Finally, the invariance constraint is implemented by minimizing the discrepancy between the conditional distributions $P(Y|Z_m^{\text{inv}}, E)$ across different environments. To realize this, we can adopt a common strategy that 
    enforces consistency in the output distributions of predictor across environments for classification tasks. Specifically, Kullback-Leibler (KL) divergence can be used as the distribution distance measure $\mathcal{D}$:
    \begin{equation*}
    \mathcal{R}_{\text{inv}}^{(m)}\!\! =\!\! \sum_{e_1 \neq e_2}\!\text{ KL}\!\left(\! P(Y|Z_m^{\text{inv},(e_1)}) \| P(Y|Z_m^{\text{inv},(e_2)})\! \right)
    \end{equation*}
    where $P(Y|Z_m^{\text{inv},(e)})$ is given by the output of predictor (after Softmax) on the corresponding environment's representation. However, this strategy requires to implement a unimodal predictor for each invariant modality representation which increases the model complexity, and training noise might be introduced if unimodal predictors are not well trained. Moreover, it is hard for regression tasks to calculate the KL-divergence between output distributions. To this end, we adopt a simpler implementation that encourages the learned invariant representations to be identical across different environments, which is a stronger constraint that satisfies the invariance constraint because the output distributions must be the same if input features were the same. We have provided the comparison results of these two variants in  Appendix~\ref{sec:comparison_invariant}. Specifically, the invariance constraint is implemented as: 
\begin{equation}
    \mathcal{R}_{\text{inv}}^{(m)} = \sum_{e_1 \neq e_2} \| Z_m^{\text{inv},(e_1)} - Z_m^{\text{inv},(e_2)} \|_1
    \end{equation}
Minimizing this term encourages the model to extract features from $X_m$ that are insensitive to noise perturbations (simulating environmental changes).
In practice, for each sample in a batch, we can assign an environment label $e$ and generate $\tilde{X}_m^{(e)}$ using Eq.~\ref{eq_inva}. For $K$ environments, we can generate $K+1$ variants of unimodal representations (including the original unimodal representation itself). The invariance loss $\mathcal{R}_{\mathrm{inv}}^{(m)}$ is computed over all $K(K+1)/2$ pairs for each sample in the batch, ensuring strong invariance constraints. 

\subsection{Mutual Information Constraint $\mathcal{R}_{\text{dec}}^{(m)}$}

Theorem~2 requires minimizing the mutual information $I(Z_m^{\text{inv}}; Z_m^{\text{spu}})$ between the invariant representation $Z_m^{\text{inv}}$ and the spurious representation $Z_m^{\text{spu}}$ to promote their disentanglement and capture independent information. Directly computing mutual information is intractable. We employ a widely used and effective alternative: approximating mutual information minimization by enforcing \textbf{orthogonality} (zero linear correlation) between the two representations in the feature space, which is a practical and computationally efficient proxy for minimizing mutual information between $Z_m^{inv}$ and $Z_m^{spu}$. Orthogonality is a necessary condition for statistical independence, and we augment this constraint with invariance and reconstruction constraints to ensure semantic separation of causal and spurious factors. This proxy is widely used in disentanglement learning for its scalability to large multimodal datasets.
Minimizing this term encourages $Z_m^{\text{inv}}$ and $Z_m^{\text{spu}}$ to learn in orthogonal directions, thereby reducing information redundancy between them.

Specifically, for each batch of training data, the correlation matrix $\bm{C}^{m}$ can be calculated as:
\begin{equation}
\bm{C}^{m} =  Nor(\bm{Z}_m^{\text{inv}}) Nor(\bm{Z}_m^{\text{spu}})^\top 
\end{equation}
where $Nor(x) = \frac{x - mean(x)}{std(x)}$ denotes feature normalization, $\bm{Z}_m^{\text{inv}}\in \mathbb{R}^{N \times d}$ denotes a batch of invariant modality representations, and $\bm{C}^{m} \in \mathbb{R}^{N \times N} $ is the correlation matrix for modality $m$. Then, we enforce orthogonality via the following operation:
\begin{equation}
\label{eq66}
\mathcal{R}_{\text{dec}}^{(m)}\!\!=\! \| diag(\bm{C}^{m})\! +\! \alpha  \cdot  (\bm{C}^{m}\! -\! diag(\bm{C}^{m})) \|_2
\end{equation}
where $diag(\bm{C}^{m})$ denotes the diagonal matrix of $\bm{C}^{m}$, and $\alpha$ is a hyperparameter that is between zero and one. 
In Eq.~\ref{eq66}, we use the Frobenius norm of matrix $\|\cdot\|_F$ (standard for matrix regularization). The term $\alpha$ balances the constraint strength between diagonal (same-sample) and off-diagonal (cross-sample) terms in the correlation matrix, which is a hyperparameter that depends on datasets (see Table~\ref{t2223}). When $\alpha$ is less than 1, it can down-weight off-diagonal terms to focus on sample-wise orthogonality. 
In this way, we can enforce a stricter constraint on the invariant and spurious representations from the same sample, and also encourage invariant and spurious representations from different samples to be orthogonal, promoting the statistical independence between two representations.


\subsection{Reconstruction Constraint $\mathcal{R}_{\text{rec}}^{(m)}$}

The reconstruction loss ensures that the disentangled representations $(Z_m^{\text{inv}}, Z_m^{\text{spu}})$ retain all information from the original input $X_m$, preventing the loss of crucial content during representation learning.

Firstly, the encoder $g_m$ maps the input $X_m$ to the disentangled representation pair $(Z_m^{\text{inv}}, Z_m^{\text{spu}})$. The decoder $r_m$ then attempts to reconstruct the original input from this pair:
\begin{equation}
\hat{X}_m = r_m(Z_m^{\text{inv}}, Z_m^{\text{spu}})
\end{equation}
The reconstruction loss is computed using MSE:
\begin{equation}
\mathcal{R}_{\text{rec}}^{(m)} = \| X_m - \hat{X}_m \|_2^2
\end{equation}
The encoder and decoder are implemented as multi-layer perceptron networks (see Figure~\ref{structure}). 

\subsection{Overall Optimization Objective}

The complete optimizable objective function is:
\begin{equation}
\mathcal{L} \! = \! \mathcal{L}_{\text{pred}}\! +\! \sum_{m=1}^{M} \lambda_1   \mathcal{R}_{\text{inv}}^{(m)}\! +\! \lambda_2  \mathcal{R}_{\text{dec}}^{(m)} + \lambda_3  \mathcal{R}_{\text{rec}}^{(m)}
\end{equation}
where $\lambda_1, \lambda_2, \lambda_3$ are hyperparameters that balance the importance of each constraint. 

\section{Experiments}\label{sec:Experiments}

CmIR is evaluated on multiple tasks of MAC, including multimodal sentiment analysis (MSA), multimodal humor detection (MHD) and multimodal sarcasm detection (MSD). The used datasets include
CMU-MOSI \cite{zadeh2016multimodal}, CMU-MOSI (OOD) \cite{CLUE2022}, CMU-MOSEI \cite{MOSEI}, CH-SIMS-v2 \cite{simsv2}, UR-FUNNY \cite{ur_funny} and MUStARD \cite{msd}. Due to space limitation, \textbf{we introduce experimental settings, baselines, datasets and additional results in Appendix.}
Codes are available at: \url{https://github.com/TmacMai/CmIR}.

\begin{table*}
\centering
\small
 \vspace{-0.2cm}
 \caption{ \label{tbase}Comparisons on the CMU-MOSI and CMU-MOSEI datasets. The results labeled with $^{\dag}$ are obtained from original papers, and other results are obtained from our experiments.
 The best results are highlighted.
 }
 \vspace{-0.1cm}
 \resizebox{1.99\columnwidth}{!}{\begin{tabular}{c||c|c|c|c|c|c|c|c|c|c|c}
  \noalign{\hrule height 1pt} 
\rowcolor{lightgray!40}
   &     & \multicolumn{5}{c|}{CMU-MOSI} & \multicolumn{5}{c}{CMU-MOSEI}  \\
 \cline{3-12}
 \rowcolor{lightgray!40}
\multirow{-2}{*}{Model} & \multirow{-2}{*}{Venue} & Acc7$\uparrow$  & Acc2$\uparrow$ & F1$\uparrow$ & MAE$\downarrow$ & Corr$\uparrow$ & Acc7$\uparrow$  & Acc2$\uparrow$ & F1$\uparrow$ & MAE$\downarrow$ & Corr$\uparrow$\\
\hline
\hline
Self-MM \cite{mmsa} & AAAI 21 & 45.8 &   84.9  &   84.8  & 0.731  & 0.785  &  53.0 &   85.2  &   85.2  &  0.540 & 0.763 \\

ConFEDE~\cite{confede} & ACL 23 & 43.3	& 85.1	& 85.2	&0.728 	&0.784 & 52.7	&85.7	&85.6	&0.538 	& 0.772 \\ 

FDMER$^{\dag}$~\cite{yang2022disentangled_FDMER} & ACM MM 22 & 44.1	& 84.6 & 84.7	& 0.724	& 0.788 & 54.1	&86.1 & 85.8 & 0.536 & 0.773 \\ 


SuCI$^{\dag}$~\cite{xu2025debiased} & AAAI 25 & 42.2& 84.6	& 84.5&- 	&- & \underline{54.6}	&85.8	&85.7	& -	& - \\ 

   C-MIB \citep{MIB} & TMM 23 &  47.7 &   87.8 &  87.8  & 0.662  & 0.835  & 52.7 &  86.9  & 86.8 & 0.542  &  0.784\\
EMOE \cite{EMOE2025} & CVPR 25  & 45.2  & 84.8  &  84.8  &  0.723 & 0.790  & 52.5  & 85.0  &  85.0 & 0.542 & 0.760 \\
Multimodal Boosting  \citep{10224356} & TMM 24 & \underline{49.1}  &  \underline{88.5} & 88.4  &  \underline{0.634} & \underline{0.855}  & 54.0  &  86.5 & 86.5 & \underline{0.523}  & 0.779 \\
ITHP \cite{ithp} & ICLR 24 &  47.7 &  \underline{88.5} &   \underline{88.5}  & 0.663  & \textbf{0.856}  & 52.2  &  87.1  & 87.1 &  0.550 & 0.792 \\
Diffusion Bridge \cite{lee2025diffusion}  & CVPR 25  & 47.3 & 86.9 & 86.8 & 0.649 & 0.839 & 53.1 & 87.1 & 87.0 & 0.531 & \textbf{0.800} \\
GSCon \cite{shi2025gradient} & TIP 25 & 45.7  & 88.1  &  88.0  &  0.696 & 0.832  & 50.8  & \underline{87.4}  & \underline{87.4} & 0.561& 0.750 \\
    \hline
    \rowcolor[HTML]{EBFAFF}
    CmIR & -  & \textbf{49.8}  &   \textbf{89.6} &  \textbf{89.5}  &  \textbf{0.616} & 0.853  & \textbf{55.1} &  \textbf{87.8} &   \textbf{87.7} & \textbf{0.513}  &  \underline{0.793} \\
    \noalign{\hrule height 1pt} 
     \end{tabular}}
\end{table*}%

\begin{table}[t]
    \vspace{0.1cm}
    \caption{The results on CH-SIMS-v2. 
    }
    \vspace{-0.2cm}
    \label{tab:SIMSResult}
    \resizebox{0.5\textwidth}{!}{
        \begin{tabular}{c||c|c|c|c|c|c}
        \noalign{\hrule height 1pt} 
\rowcolor{lightgray!40}
         & \multicolumn{6}{c}{CH-SIMS v2} \\ 
        \cline{2-7}
        \rowcolor{lightgray!40}
       \multirow{-2}{*}{Model}  & Acc5↑ & Acc3↑ & Acc2↑ & F1↑ & MAE↓ & Corr↑  \\ \hline
       \hline
        MISA \cite{MISA} & 47.5 & 68.9 & 78.2 & 78.3 & 0.342 & 0.671 \\
        MAG-BERT \cite{MAG-BERT} & 49.2 & 70.6 & 77.1 & 77.1 & 0.346 & 0.641 \\
        Self-MM \cite{mmsa} & \underline{53.5} & 72.7 & 78.7 & 78.6 & 0.315 & 0.691 \\
        MMIM \cite{MMIM} & 50.5 & 70.4 & 77.8 & 77.8 & 0.339 & 0.641 \\
        AV-MC \cite{simsv2} & 52.1 & 73.2 & \underline{80.6} & \underline{80.7} & 0.301 & 0.721 \\
        KuDA \cite{kuda} & 53.1 & \underline{74.3} & 80.2 & 80.1 & \underline{0.289} & \underline{0.741} \\
        DLF \citep{wang2025dlf} & 47.5 & 70.0 & 78.1 & 77.9 & 0.346 & 0.683  \\
        Diffusion Bridge \cite{lee2025diffusion} & 52.5 & 70.7 & 78.6 & 79.0 & 0.323 & 0.677 \\
        \hline
        \rowcolor[HTML]{EBFAFF}
        CmIR & \textbf{56.0} & \textbf{75.0} & \textbf{81.9} & \textbf{82.0} & \textbf{0.286} & \textbf{0.742}  \\  
         \noalign{\hrule height 1pt} 
        \end{tabular}
    }
\end{table}

\subsection{Performance on the MSA Task}


The performance of CmIR on MSA is summarized in Table~\ref{tbase} and Table~\ref{tab:SIMSResult}. On CMU-MOSI, CmIR surpasses strong baseline ITHP \cite{ithp} by more than 2 points in Acc7 and  1 points in Acc2. For CMU-MOSEI, it outperforms GSCon \cite{shi2025gradient} and achieves the best scores in Acc2, F1, MAE, and Acc7.
Compared with feature disentanglement method FDMER \cite{yang2022disentangled_FDMER} and previous backdoor-adjustment work that focuses on specific confounders \cite{xu2025debiased}, CmIR demonstrates considerable improvement.
Similar superiority is observed on CH-SIMS-v2 (Table~\ref{tab:SIMSResult}), where CmIR outperforms all baselines across every metric, including an improvement of 2.5 points in Acc5. Overall, \textbf{CmIR establishes state-of-the-art results on MSA across three standard benchmarks}. This strong performance is primarily attributed to CmIR's causal learning strategy, which effectively learns invariant representations across all environments that eliminate general bias and enable a more robust multimodal learning.

\subsection{Performance on the MHD and MSD Tasks}


To assess the task generalizability of CmIR, we evaluate it on MHD and MSD  (classification tasks) using UR-FUNNY  and MUStARD datasets. 
The baselines include
MulT \cite{MULT}, Self-MM \cite{mmsa}, MMIM \cite{MMIM}, HKT \cite{HKT}, DMD \cite{li2023decoupled_cvpr}, DMD+SuCI \cite{xu2025debiased}, AtCAF \cite{huang2025atcaf}, MAG-XLNet \cite{MAG-BERT}, MCL \cite{mcl}, MGCL \cite{mgcl}, ITHP \cite{ithp}, MISA \cite{MISA}, and MIL \cite{MIL}, where DMD+SuCI and AtCAF are causality-based methods. As shown in Figure~\ref{mhd_msd}, CmIR surpasses the strongest baselines (AtCAF and MGCL) by margins exceeding 4 and 2 points on UR-FUNNY and MUStARD, respectively. Overall, \textbf{CmIR achieves competitive performance on both MHD and MSD}, confirming its effectiveness and \textbf{strong generalizability to diverse multimodal tasks}.

\begin{figure}
    \centering
    \begin{subfigure}[t]{0.5\linewidth} 
        \centering
        \includegraphics[width=\linewidth]{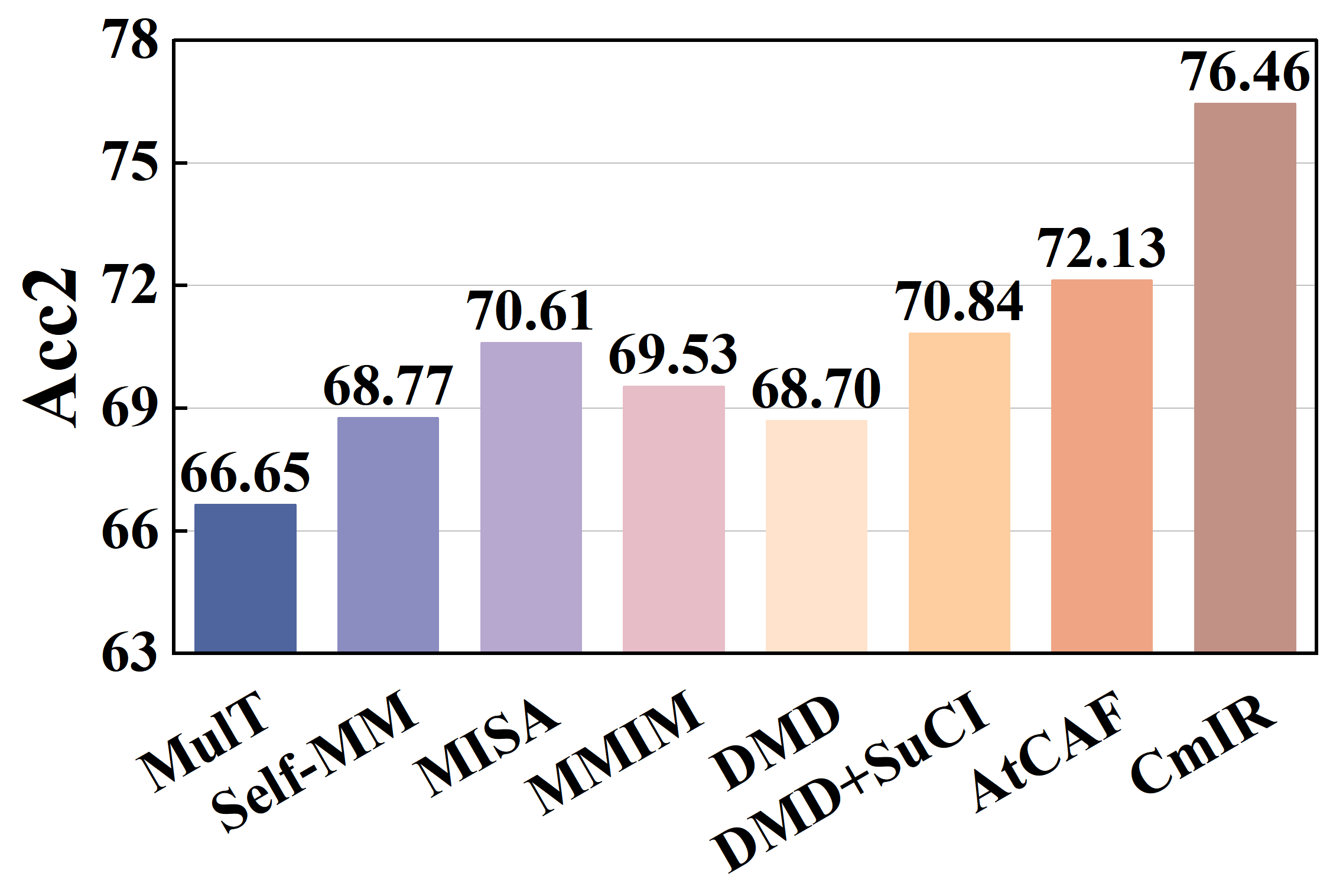}
        \vspace{-0.4cm}
        \caption{Results on MHD task}
    \end{subfigure}
    \begin{subfigure}[t]{0.48\linewidth}
        \centering
        \includegraphics[width=\linewidth]{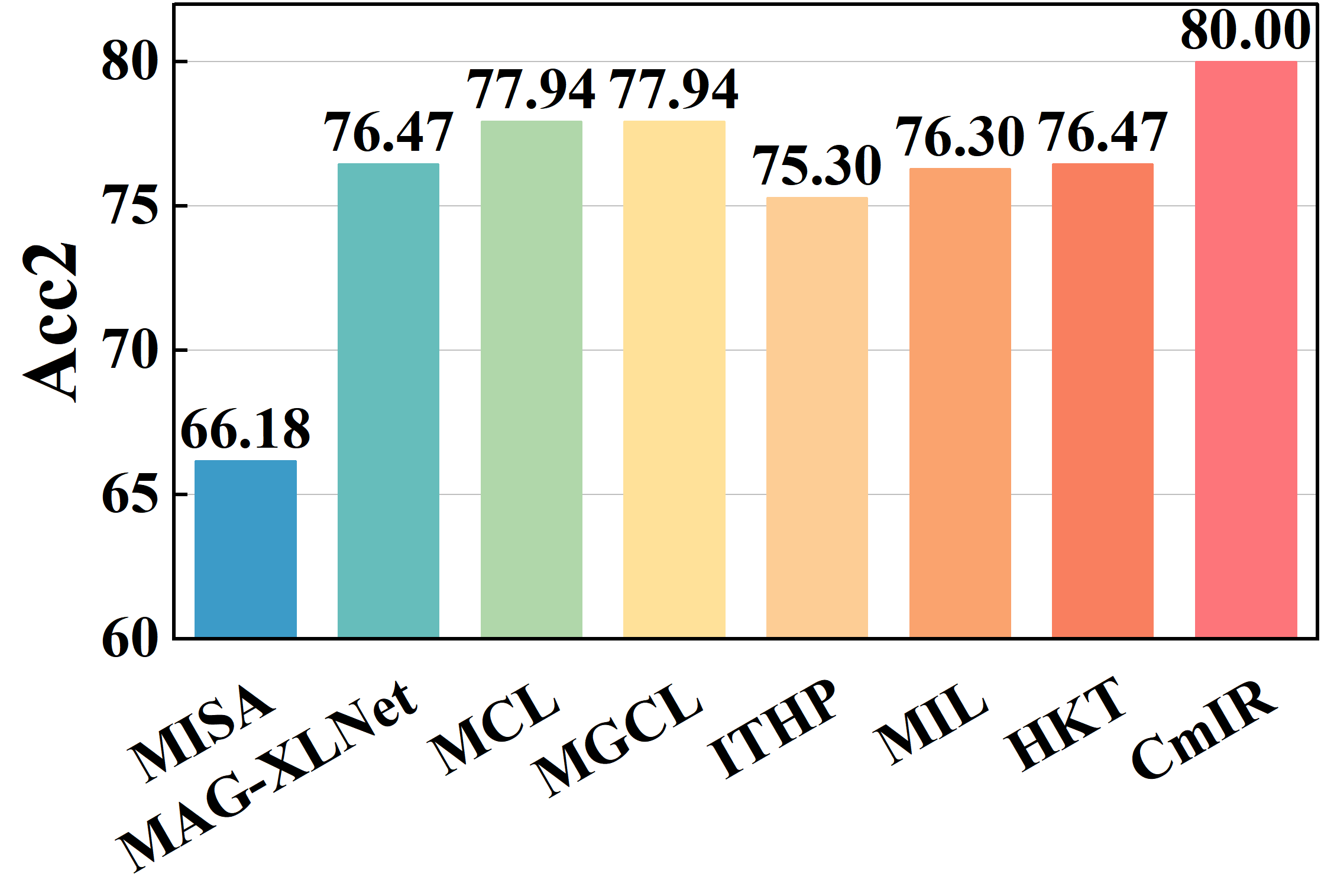}
        \vspace{-0.4cm}
        \caption{Results on MSD task}
    \end{subfigure}
    \vspace{-0.2cm}
    \caption{The results on (a) UR-FUNNY \cite{ur_funny} and (b) MUStARD \cite{msd} datasets.}
    \label{mhd_msd}
\end{figure}

\begin{table}[!t]
\centering
\caption{Results on CMU-MOSI (OOD). Following causal-based methods~\cite{CLUE2022,sun2023general}, {Acc2}$^\dagger$ and F1$^\dagger$ denote results considering neutral samples.}
\vspace*{-2mm}
\label{tab:OOD}
\resizebox{\linewidth}{!}{
\setlength\tabcolsep{5pt}
\renewcommand\arraystretch{1.1}
\begin{tabular}{c||c c c c c}
\noalign{\hrule height 1pt} 

\rowcolor{lightgray!45}
Methods
& Acc7 & {Acc2}$^\dagger$  & Acc2 & F1$^\dagger$ & F1  \\ 

\hline \hline
CLUE~\cite{CLUE2022}& 41.8 & 78.8 &79.9 & 78.8  &79.9 \\  
GEAR~\cite{sun2023general}  & - & \underline{80.5}  & \underline{82.1} & \underline{80.4}  & \underline{82.1}  \\
MulDeF~\cite{MulDeF2024}  & \underline{42.9} & 79.6   &81.5 & 79.7  &81.6  \\
Self-MM~\cite{mmsa} & 40.3 & 76.7 &78.1 & 76.7  &78.1 \\
ITHP~\cite{ithp} & 41.3 & 79.5 & 81.3 & 79.5 & 81.3 \\  
KAN-MCP~\cite{KAN-MCP2025} & 41.8 & 79.3 & 81.5 & 79.1 & 81.4 \\  
\hline
\rowcolor[HTML]{EBFAFF}
CmIR & \textbf{48.5} & \textbf{83.0} & \textbf{84.4} & \textbf{83.0} & \textbf{84.4} \\  

\noalign{\hrule height 1pt} 

\end{tabular}  
}
\vspace{-0.3cm}
\end{table}

\subsection{Results under OOD scenarios.}

The results under OOD scenarios is depicted in Table~\ref{tab:OOD}. It is observed that: I) All models degrade when moving from in-distribution to OOD settings, verifying that spurious correlations impede generalization; II) \textbf{CmIR delivers significantly stronger OOD performance} than standard multimodal baselines. Its advantage over ITHP~\cite{ithp} grows notably, with Acc2 improvement rising from 1.5 points to 3.5 points, and Acc7 improvement increasing from 2.1 points  to 7.2 points, underscoring the efficacy of our causal strategy; III) Compared to recent causality-based methods (CLUE, GEAR, MulDeF), CmIR consistently outperforms them in all metrics, highlighting its robustness in mitigating broad spurious correlations. This is because instead of focusing on specific bias or assumption, CmIR directly learns invariant representations that are stable across all environments, which is more general and applicable to various distribution shifts.

\begin{table*}[t]
\centering
\small
\caption{ \label{xxx}Discussion on noisy modalities on CMU-MOSI and CMU-MOSEI. MuBo denotes Multimodal Boosting.}
\vspace{-0.2cm}
\resizebox{1.9\columnwidth}{!}{%
\begin{tabular}{c|c|c|c|c|c|c|c|c|c}
\noalign{\hrule height 1pt} 
\rowcolor{lightgray!40}
 & &\multicolumn{4}{c|}{Gaussian Noise} & \multicolumn{4}{c}{OOD Noises (Laplace Noise and Random Erasing Noise)} \\
 \hline
 \rowcolor{lightgray!40}
 & 
 & \textbf{TMDC}  & \textbf{C-MIB} & \textbf{MuBo} 
& \cellcolor[HTML]{EBFAFF} \textbf{CmIR} & \textbf{TMDC}  & \textbf{C-MIB} 
& \textbf{MuBo} 
& \cellcolor[HTML]{EBFAFF} \textbf{CmIR}  \\
\cline{3-10}
\rowcolor{lightgray!40}
\multirow{-2}{*}{ } & \multirow{-2}{*}{\textbf{NR}} & Acc2/MAE & Acc2/MAE  & Acc2/MAE & \cellcolor[HTML]{EBFAFF} Acc2/MAE & Acc2/MAE  & Acc2/MAE & Acc2/MAE  & \cellcolor[HTML]{EBFAFF} Acc2/MAE  \\
\hline
\hline
\multirow{8}{*}{MOSI} 
& 0.1 &  87.4 / 0.748  &  87.8 / 0.670 &  86.7 / 0.678 & \cellcolor[HTML]{EBFAFF}   \textbf{88.1 / 0.615} &  87.2 / 0.769 &  \textbf{87.8} / 0.666 &  87.4 / 0.639 & \cellcolor[HTML]{EBFAFF} \textbf{87.8 / 0.638}  \\
& 0.2 &  86.6 / 0.741 &  \textbf{87.5} / 0.726 &  86.1 / 0.738 & \cellcolor[HTML]{EBFAFF} 87.4 / \textbf{0.621} &  86.7 / 0.861 &  87.5 / 0.689 &  87.3 / 0.681 & \cellcolor[HTML]{EBFAFF} \textbf{88.2 / 0.644}  \\
& 0.3 &  86.9 / 0.733 & 86.4 / 0.912  & 86.4  / 0.785 & \cellcolor[HTML]{EBFAFF}   \textbf{87.3 / 0.650} &  86.8 / 0.741 & 85.1 / 1.019  & \textbf{87.1} / 0.710 & \cellcolor[HTML]{EBFAFF} 86.9 / \textbf{0.678}  \\
& 0.4 &  85.5 / 0.792 &  83.2 / 1.366  & 85.5 / 0.841 & \cellcolor[HTML]{EBFAFF} \textbf{86.4 / 0.669} &  85.2 / 0.772 &  85.6 / 1.303 & 85.3 / 0.900 & \cellcolor[HTML]{EBFAFF} \textbf{85.8 / 0.663}  \\
& 0.5 &  85.0 / 0.912 &  84.9 / 1.660 &  86.1 / 1.172  & \cellcolor[HTML]{EBFAFF} \textbf{86.9 / 0.685} &  84.6 / 0.867 &  83.9 / 1.900 &  \textbf{86.9} / 1.114  & \cellcolor[HTML]{EBFAFF} 85.6 / \textbf{0.715}  \\
& 0.6 &  84.8 / 1.181 &  80.8 / 2.595 &  82.0 / 1.355  & \cellcolor[HTML]{EBFAFF} \textbf{87.1 / 0.680} &  83.1 / 0.884 &  86.5 / 2.272 &  85.0 / 1.060  & \cellcolor[HTML]{EBFAFF} \textbf{87.0 / 0.689}  \\
& 0.7 &  84.0 / 1.277 & 82.1 / 3.146 & 84.4 / 1.750 & \cellcolor[HTML]{EBFAFF} \textbf{86.1 / 0.740} &  82.0 / 0.966 & 84.0 / 3.516	 & 84.7 / 1.731 & \cellcolor[HTML]{EBFAFF} \textbf{85.6 / 0.771} \\
& \textbf{Avg} &  85.7 / 0.912 & 84.7 / 1.582 & 85.3 / 1.046  & \cellcolor[HTML]{EBFAFF} \textbf{87.0 / 0.666} &  85.1 / 0.837 & 85.8 / 1.624 & 86.2 / 0.976  & \cellcolor[HTML]{EBFAFF} \textbf{86.7 / 0.685}  \\
\hline
\multirow{8}{*}{MOSEI} 
& 0.1 &  86.6 / 0.618 & 86.1 / 0.545 &  86.4 / 0.544  & \cellcolor[HTML]{EBFAFF} \textbf{87.5 / 0.528} &  86.4 / 0.613 & 86.9 / 0.564 & 86.1 / 0.561  & \cellcolor[HTML]{EBFAFF} \textbf{87.1 / 0.523}  \\
& 0.2 &  86.2 / 0.593 & 84.5 / 0.582 &  86.6 / 0.557  & \cellcolor[HTML]{EBFAFF} \textbf{87.2 / 0.522} &  85.5 / 0.631 & 85.1 / 0.594 &  85.2 / 0.585  & \cellcolor[HTML]{EBFAFF} \textbf{86.6 / 0.523} \\
& 0.3 &  85.3 / 0.603 & 85.6 / 0.622 &  85.5 / 0.623  & \cellcolor[HTML]{EBFAFF} \textbf{86.8 / 0.531} &  85.0 / 0.667 & 85.9 / 0.665 &  85.4 / 0.610  & \cellcolor[HTML]{EBFAFF} \textbf{ 86.5 / 0.528 } \\
& 0.4 &  84.4 / 0.596 & 84.4 / 0.703  &  85.3 / 0.682 & \cellcolor[HTML]{EBFAFF} \textbf{86.6 / 0.535} &  84.9 / 0.682 & 84.8 / 0.767  &  84.4 / 0.714 & \cellcolor[HTML]{EBFAFF} \textbf{86.7 / 0.535} \\
& 0.5 &  84.6 / 0.592 &  83.7 / 0.875 &  84.1 / 0.724  & \cellcolor[HTML]{EBFAFF} \textbf{85.9 / 0.558} &  84.3 / 0.751 &  73.1 / 0.768 &  84.4 / 0.763  & \cellcolor[HTML]{EBFAFF} \textbf{86.3 / 0.563}  \\
& 0.6 &  83.8 / 0.602 &  82.4 / 1.054 &  85.4 / 0.924  & \cellcolor[HTML]{EBFAFF} \textbf{85.8 / 0.548} &  82.7 / 0.829 &  82.0 / 0.856 &  84.1 / 0.795  & \cellcolor[HTML]{EBFAFF} \textbf{86.3 / 0.644}  \\
& 0.7  &  83.4 / \textbf{0.623} & 80.5  / 1.404  & 80.3 / 1.125 & \cellcolor[HTML]{EBFAFF} \textbf{86.3} / 0.671 &  81.7 / 0.888 & \textbf{86.5} / 2.366  & 85.0 / 1.158 & \cellcolor[HTML]{EBFAFF} 85.4 / \textbf{0.663}  \\
& \textbf{Avg} &  84.9 / 0.604 & 83.9 / 0.826 &  84.8 / 0.740  & \cellcolor[HTML]{EBFAFF} \textbf{86.6 / 0.556} &  84.4 / 0.723 & 83.5 / 0.940 &  84.9 / 0.741  & \cellcolor[HTML]{EBFAFF} \textbf{86.4 / 0.568}  \\
\noalign{\hrule height 1pt} 
\end{tabular}}
 \end{table*}%

\subsection{Discussion on Noisy Modalities}\label{sec:noise}

(1) To assess the robustness of CmIR to modality noise, we corrupt all modalities of all training and testing samples with \textbf{Gaussian noise} (the noise rate NR is set at 10\% -70\%). 
The compared baselines include  TMDC \cite{zhuang2025tmdc}, C-MIB \cite{MIB} and Multimodal Boosting \cite{10224356}, which adopt the same training and testing settings as CmIR. Following prior work \cite{10224356}, we report Acc2 and MAE.
Table~\ref{xxx} shows that \textbf{CmIR outperforms competitive baselines across most metrics (particularly in MAE)}, and its \textbf{performance advantage becomes even more pronounced as the noise level increases}.
This is mainly because CmIR can more accurately identify and extract causal features from noisy inputs and maintain stable predictive ability for labels via the proposed constraints. 
These results indicate the robustness of CmIR in handling noise data.

(2) 
To assess the resilience of CmIR to \textbf{out-of-distribution (OOD) noises} not encountered in training, we adopt a mixed-noise evaluation strategy: training samples are contaminated with Gaussian noise, while testing samples are perturbed with distinct noise types (Laplace and random erasing). As presented in Table~\ref{xxx}, `CmIR (OOD)' obtains competitive results and significantly surpasses strong baselines. These findings confirm that \textbf{CmIR generalizes effectively to unseen noises}, highlighting its promising application potential.

\begin{figure*}
    \centering
    \begin{subfigure}[b]{0.24\linewidth} 
        \centering
        \includegraphics[width=\linewidth]{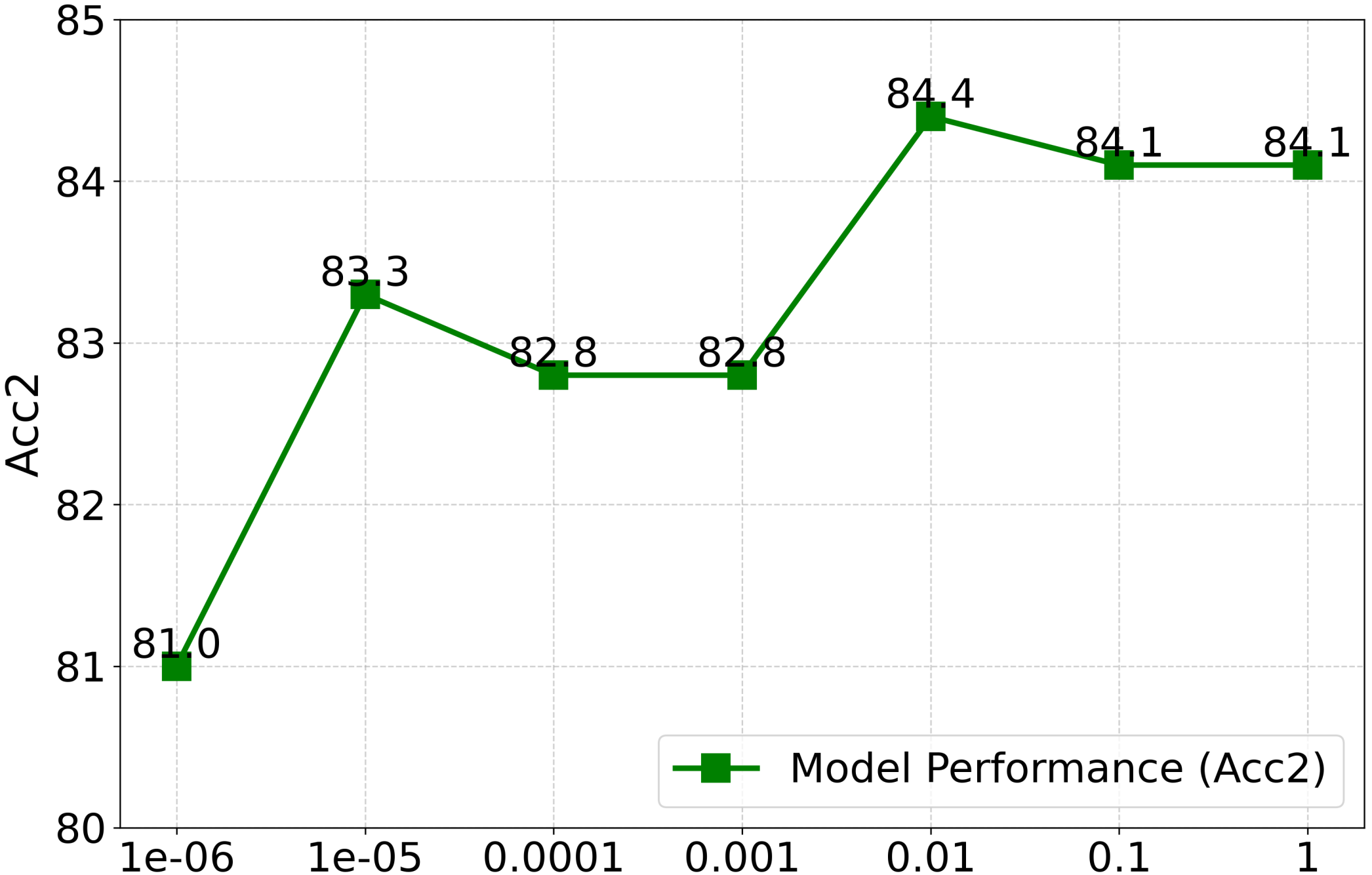}
        \caption{Weight $\lambda_1$}
    \end{subfigure}
    \begin{subfigure}[b]{0.24\linewidth}
        \centering
        \includegraphics[width=\linewidth]{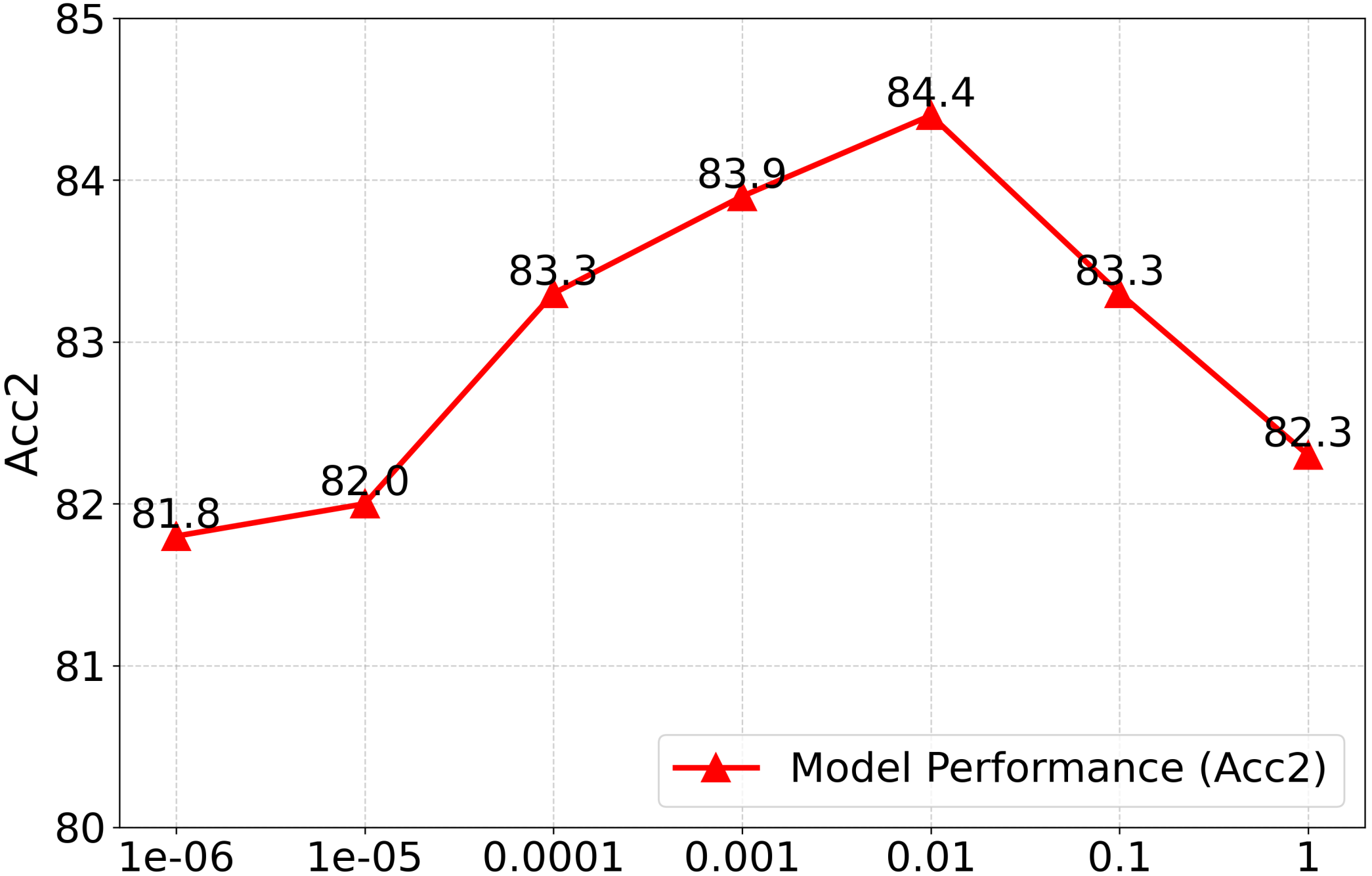}
        \caption{Weight $\lambda_2$}
    \end{subfigure}
    \begin{subfigure}[b]{0.24\linewidth}
        \centering
        \includegraphics[width=\linewidth]{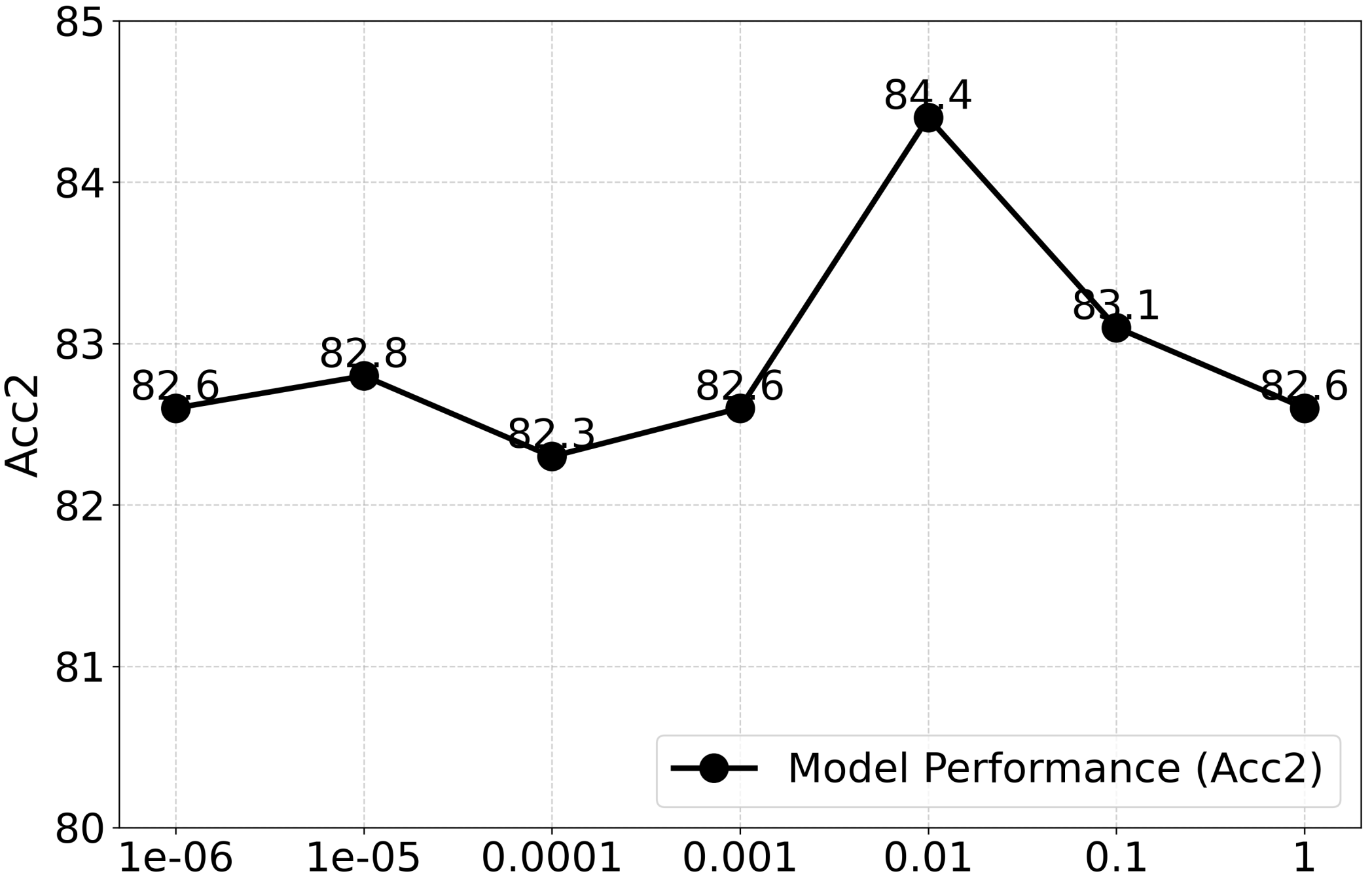}
        \caption{Weight $\lambda_3$}
    \end{subfigure}
    \begin{subfigure}[b]{0.24\linewidth}
        \centering
        \includegraphics[width=\linewidth]{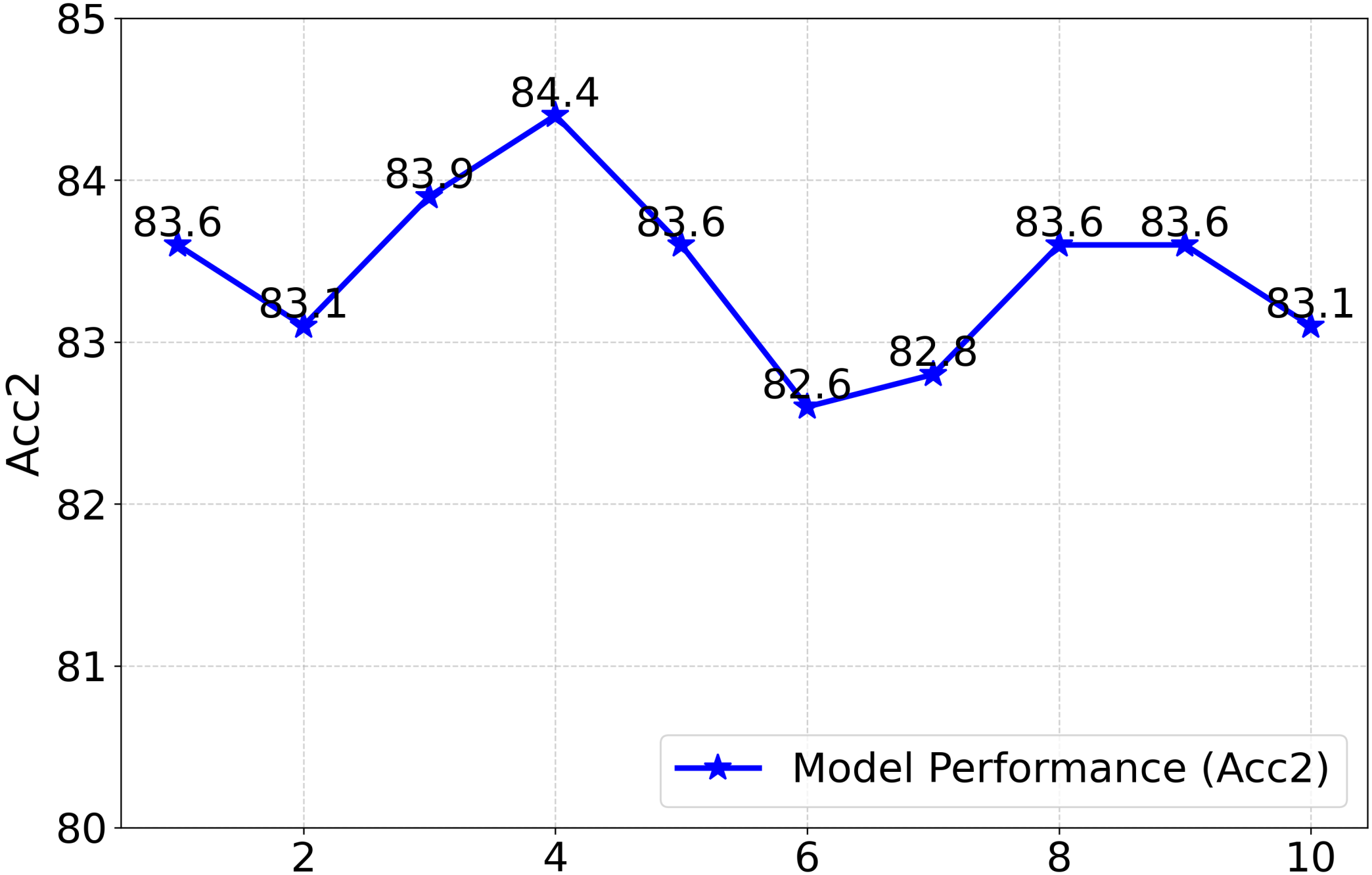}
        \caption{Environment Number $K$}
    \end{subfigure}
    \vspace{-0.2cm}
    \caption{Acc2 of CmIR w.r.t the change of constraint weights and the number of environments.}
    \label{9_mosi}
\end{figure*}

\subsection{Ablation Experiments} \label{sec:ablation}

\begin{table}
\vspace{-0.2cm}
\centering
\caption{\label{t3}Ablation experiments on CMU-MOSI.}
\vspace{-0.2cm}
\setlength\tabcolsep{5pt}
\renewcommand\arraystretch{1.2}
\resizebox{0.8\columnwidth}{!}{\begin{tabular}{c||ccc}
\noalign{\hrule height 1pt} 
\rowcolor{lightgray!40}

\textbf{Model}& \textbf{Acc7}$\uparrow$ & \textbf{Acc2}$\uparrow$ & \textbf{MAE}$\downarrow$ \\ 
\hline \hline
  Vanilla Framework   & 46.3 & 85.8  & 0.681 \\
  W/O $\mathcal{R}_{\text{inv}}^{(m)}$  & 45.7 & 86.7 & 0.652 \\
W/O $\mathcal{R}_{\text{dec}}^{(m)}$  & 45.7 & 86.9 & 0.671 \\
W/O $\mathcal{R}_{\text{rec}}^{(m)}$  & 47.7 & 88.1 & 0.623 \\
  \hline
\rowcolor[HTML]{EBFAFF}
CmIR  & \textbf{49.8} &   \textbf{89.6} & \textbf{0.616}  \\
\noalign{\hrule height 1pt} 
\end{tabular}}
\end{table}

\textbf{(1) Causal Inference}:
As presented in Table~\ref{t3}, in the case of `Vanilla Framework', we directly use the raw modality features for prediction. The model exhibits its sharpest performance decline (over 3 points in Acc2 and Acc7), suggesting the importance of learning causal invariant representations for more robust multimodal prediction and verifying our claim;
\textbf{(2) Invariance Constraint}:
As shown in `W/O $\mathcal{R}_{\text{inv}}^{(m)}$',
removing invariance constraint leads to a noticeable performance drop, 
because it is the core constraint to learn causal invariant representations. Without it, we cannot ensure that the learned $Z_{m}^{\text{inv}}$ is environment-invariant and the core idea of CmIR cannot be realized;
\textbf{(3) Mutual Information Constraint}:
 When $\mathcal{R}_{\text{dec}}^{(m)}$' is removed, the extent of performance decline is similar to that observed when $\mathcal{R}_{\text{inv}}^{(m)}$ is removed, indicating the necessity of minimizing the mutual information between $Z_{m}^{\text{inv}}$ and $Z_{m}^{\text{spu}}$, and demonstrating the effectiveness of our disentanglement framework. Compared with learning only invariant representations \cite{songlearning}, CmIR simultaneously learns both invariant and spurious representations while minimizing their mutual information, enabling the model to understand and learn the properties of invariant representations more easily and comprehensively;
\textbf{(4) Reconstruction Constraint}:
 When $\mathcal{R}_{\text{rec}}^{(m)}$  is removed, the performance also shows a noticeable decline, although the drop is the smallest among all ablations. This occurs because the reconstruction loss ensures the causal and spurious representations fully retain all information from the raw features, thereby preventing information loss and suboptimal disentanglement.

\subsection{\textbf{Hyperparameter Robustness Analysis}} \label{sec:hyper_a}

We evaluate the effectiveness of hyperparameters on CMU-MOSI (OOD), including the weights for invariance constraint $\lambda_1$, mutual information constraint $\lambda_2$, reconstruction constraint $\lambda_3$, and the number of environments $K$. 
As shown in Figure~\ref{9_mosi} (a), (b), and (c), 
when the values of weights are small, the performance of CmIR experiences a certain degree of degradation, as the effect of the constraints is not fully utilized. Among them, the performance drop is most pronounced when the weight of invariance constraint decreases, highlighting its importance. Conversely, when the values of $\lambda_2$ and $\lambda_3$ are too large, the performance also declines, likely because mutual information and reconstruction constraints dominate the learning process, preventing sufficient attention to the invariance constraint and the prediction loss. Moreover, as shown in Figure~\ref{9_mosi} (d), the performance remains relatively stable as the value of $K$ varies, indicating the robustness of CmIR. Overall, \textbf{when hyperparameters are varied across a wide range, CmIR’s performance consistently maintains a good level} (Acc2 $\geq$ 81\%), which to some extent demonstrates the stability of CmIR.

\begin{figure}
    \centering
    \begin{subfigure}[t]{0.5\linewidth} 
        \centering
        \includegraphics[width=\linewidth]{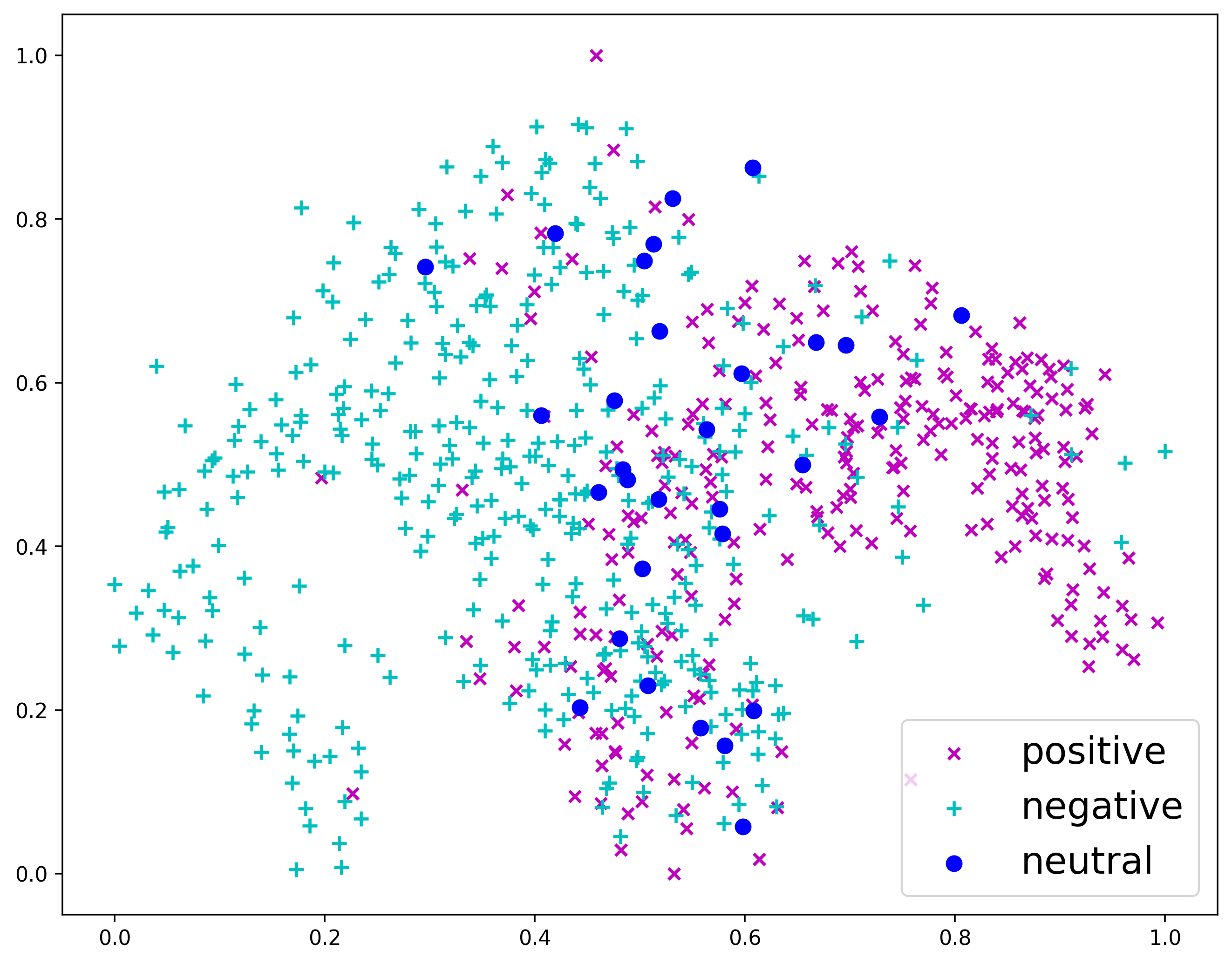}
        \vspace{-0.4cm}
        \caption{Regular Features}
    \end{subfigure}
    \begin{subfigure}[t]{0.48\linewidth}
        \centering
        \includegraphics[width=\linewidth]{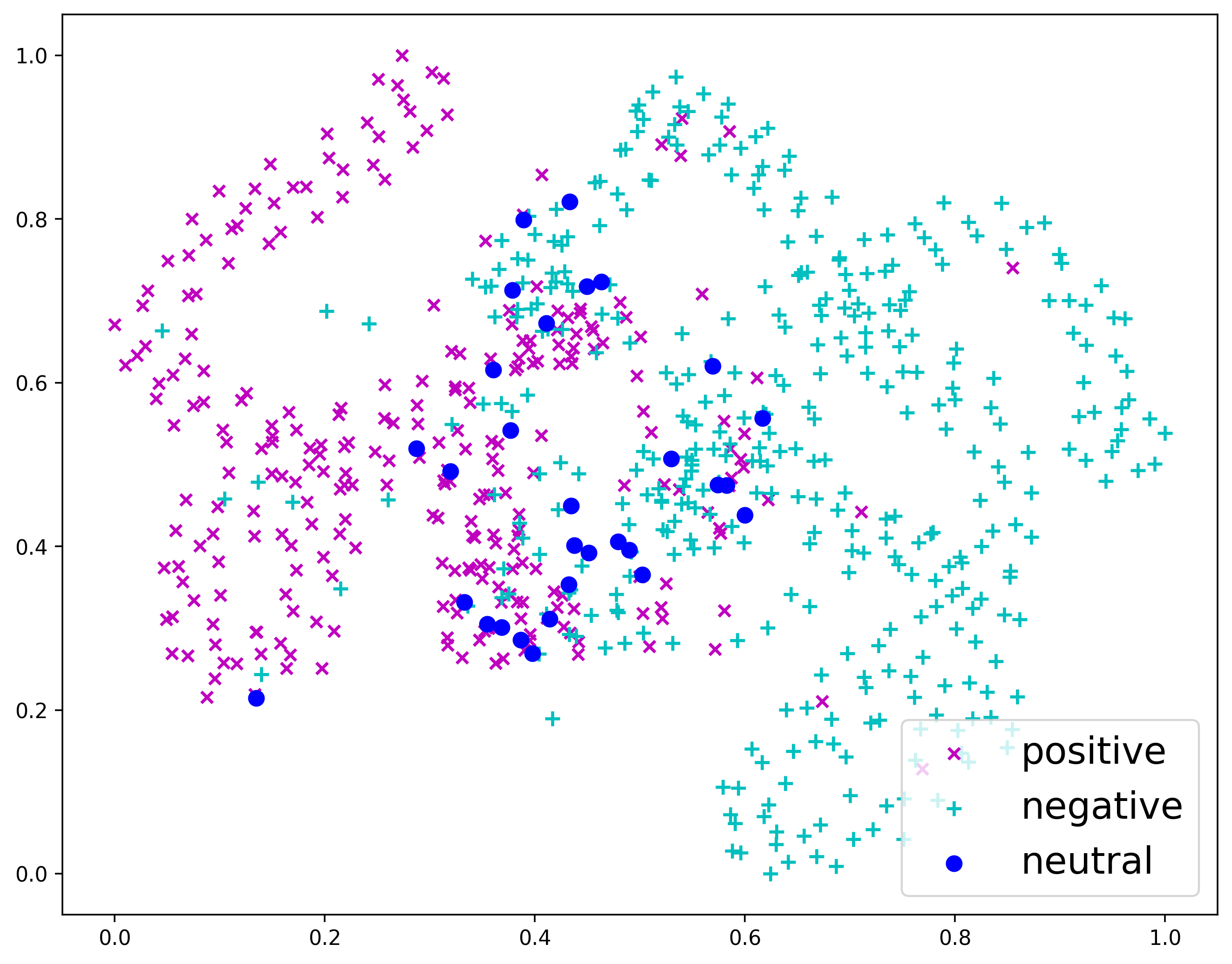}
        \vspace{-0.4cm}
        \caption{Invariant Features}
    \end{subfigure}
    \vspace{-0.2cm}
    \caption{\label{9}T-SNE visualization of language features without/with causal inference learning. 
    }
\end{figure}

\subsection{Visualization of Invariant Representations}

To demonstrate that CmIR indeed learns invariant representations that maintain stable predictive power for labels, we intervene with OOD noise on test samples and visualize the extracted causal language representations, alongside visualizing the language  representations obtained without causal inference training. As shown in Figure~\ref{9}, the causal representations learned by CmIR for different classes are well-separated in the feature space, with neutral samples concentrated between the positive and negative ones. In contrast, regular representations learned without CmIR exhibit substantial overlap, and neutral samples appear more scattered. This indicates that \textbf{even under noisy conditions, the causal representations learned by CmIR more accurately reflect label information}.

\section{Conclusion}
We propose CmIR for robust multimodal learning.
By disentangling each modality into invariant causal and spurious components, CmIR learns stable and causally-aware representations. We provide theoretical guarantees for CmIR, and demonstrate state-of-the-art results in multiple tasks. CmIR excels under distribution shifts and noisy-modality conditions, highlighting its practical robustness.

\section*{Limitations}

While CmIR demonstrates strong performance and robustness, our work has certain limitations. First, the environmental simulation via feature perturbation, while effective, may not fully capture the complexity of real-world distribution shifts. Future work could explore more sophisticated environment generation strategies or incorporate real-world multi-environment datasets. Second, the mutual information minimization constraint implemented via feature orthogonality is an approximation. More precise mutual information estimation techniques could be integrated for potentially better disentanglement, albeit at increased computational cost. These limitations, however, do not undermine the core theoretical contributions or the empirical effectiveness of the proposed framework.

\section{Acknowledgment}
This work is supported by the Guangdong Philosophy and Social Sciences Planning Project (No. GD26YJY34).

\bibliography{custom}

\appendix

\section{Detailed Theoretical Analysis}
\label{sec:theory_a}

Here we establish a detailed theoretical foundation for our causal approach to multimodal learning. We first establish the existence/definition and extractability of causal invariant modality representations, then prove their advantages in terms of generalization performance.

\subsection{Definition of Invariant Representations}
\label{subsec:invariant_representations_a}


\textbf{Theorem 1} (Definition of Causal Invariant Modality Representations)
\label{thm:invariant_representations_a}
Assume there exists a function class $\Phi_m = \{\phi_m: \mathcal{X}_m \rightarrow \mathcal{Z}_m^{\text{inv}}\}$ and a distribution distance measure $\mathcal{D}$ (e.g., KL divergence or Wasserstein distance) such that the following optimization problem has a solution:
\begin{equation*}
\begin{split}
\phi_m^*\! =\!  \arg\min_{\phi_m \in \Phi_m} \max_{e_1,e_2 \in \mathcal{E}} 
&\mathcal{D}(P(Y|\phi_m(X_m),E\!=\!e_1), \\
&P(Y|\phi_m(X_m),E=e_2))
\end{split}
\end{equation*}
Then $\phi_m^*(X_m)$ constitutes a causal invariant modality representation satisfying:
\begin{equation*}
\begin{split}
P(Y|\phi_m^*(X_m), E\!=\!e_1) =& P(Y|\phi_m^*(X_m), E\!=\!e_2), \\ 
& \forall e_1,e_2 \in \mathcal{E}
\end{split}
\end{equation*}

\begin{proof}
The proof follows from the invariance principle in causal inference~\cite{peters2016causal}. We proceed in three steps:

\textbf{Step 1 (Necessity of Invariance):} If a feature $Z$ contains causal information about $Y$, then the conditional distribution $P(Y|Z)$ should remain invariant across different environments~\cite{arjovsky2019invariant}. This is because causal mechanisms are stable under interventions on non-descendant variables in the causal graph.

\textbf{Step 2 (Sufficiency of the Optimization Objective):} The optimization objective directly minimizes the maximum discrepancy of $P(Y|\phi_m(X_m),E)$ across environments. By the properties of the distribution distance measure $\mathcal{D}$, $\mathcal{D}(P_1,P_2)=0$ if and only if $P_1=P_2$. Therefore, the optimal solution $\phi_m^*$ must satisfy:
\begin{equation*}
\begin{split}
\mathcal{D}(&P(Y|\phi_m^*(X_m),E=e_1), \\
&P(Y|\phi_m^*(X_m),E=e_2)) = 0
\end{split}
\end{equation*}
for all environment pairs $e_1,e_2 \in \mathcal{E}$, which implies:
\begin{equation*}
P(Y|\phi_m^*(X_m), E=e_1) = P(Y|\phi_m^*(X_m), E=e_2)
\end{equation*}

\textbf{Step 3 (Causal Interpretation):} The condition $P(Y|\phi_m^*(X_m), E=e_1) = P(Y|\phi_m^*(X_m), E=e_2)$ for all $e_1,e_2$ implies that $\phi_m^*(X_m)$ blocks all backdoor paths from $E$ to $Y$ that pass through modality $m$. By the backdoor criterion~\cite{peters2016causal}, this means $\phi_m^*(X_m)$ contains only causal features from $Z_m^{\text{inv}}$ and excludes spurious features from $Z_m^{\text{spu}}$, as the latter would create environment-dependent associations with $Y$.

This completes the proof that $\phi_m^*(X_m)$ is a valid causal invariant modality representation capturing only causal features.
\end{proof}


\subsection{Extractability of Invariant Representations}

\label{subsec:disentangled_representations_a}

While Theorem~\ref{thm:invariant_representations} establishes the definition of invariant representations, practical implementation requires extracting these representations from raw modalities while preserving all relevant information. This motivates our disentanglement approach.

\textbf{Theorem 2} (Theoretical Guarantee for Disentangled Representations)
\label{thm:disentangled_representations_a}
Consider encoder functions $g_m: \mathcal{X}_m \rightarrow (\mathcal{Z}_m^{\text{inv}}, \mathcal{Z}_m^{\text{spu}})$ and decoder functions $r_m: (\mathcal{Z}_m^{\text{inv}}, \mathcal{Z}_m^{\text{spu}}) \rightarrow \mathcal{X}_m$ that optimize the following objective:
\begin{align*}
&\min_{\{g_m,r_m,h\}_{m=1}^M} \mathbb{E}_{e\in\mathcal{E}}[\mathcal{L}_{\text{pred}}(Y, h(\{Z_m^{\text{inv}}\}_{m=1}^M))] \\
&+ \lambda_1 \sum_{m=1}^M \mathcal{R}_{\text{inv}}^{(m)} 
+ \lambda_2 \sum_{m=1}^M \mathcal{R}_{\text{dec}}^{(m)} 
+ \lambda_3 \sum_{m=1}^M \mathcal{R}_{\text{rec}}^{(m)}
\end{align*}
where:
\begin{itemize}
    \item $\mathcal{L}_{\text{pred}}$ is the prediction loss and $h$ is the prediction head 
    \item $\mathcal{R}_{\text{inv}}^{(m)} = \sum_{e_1 \in \mathcal{E}} \sum_{e_2 \in \mathcal{E}} 
    \mathcal{D}(P(Y|Z_m^{\text{inv}},E=e_1), 
    P(Y|Z_m^{\text{inv}},E=e_2))$ enforces invariance
    \item $\mathcal{R}_{\text{dec}}^{(m)} = I(Z_m^{\text{inv}}; Z_m^{\text{spu}})$ minimizes mutual information between invariant and spurious components
    \item $\mathcal{R}_{\text{rec}}^{(m)} = \|X_m - r_m(Z_m^{\text{inv}}, Z_m^{\text{spu}})\|^2$ ensures reconstruction capability
    \item $\lambda_1,\lambda_2,\lambda_3 > 0$ are hyperparameters
\end{itemize}
Assuming the label $Y$ is independent of environment $E$, the function classes $\{g_m,r_m, h\}$ have sufficient capacity and the data follows the SCM described in Section~\ref{subsec:invariant_representations}, then as $\lambda_1,\lambda_2,\lambda_3 \rightarrow \infty$, the optimal solution satisfies:
\begin{enumerate}
\item \(\displaystyle \lim_{\lambda_3\to\infty} \mathbb{E}\bigl[\|X_m - r_m(Z_m^{\mathrm{inv}},Z_m^{\mathrm{spu}})\|^2\bigr] = 0\) (perfect reconstruction is achieved)
    \item $Z_m^{\text{inv}} \perp\!\!\!\perp E$ (invariant component is environment-independent)
    \item $I(Y; Z_m^{\text{spu}} | Z_m^{\text{inv}}, E) = 0$ (spurious component contains no additional causal information)
\end{enumerate}

\begin{proof}
We prove the three claims in order. The limit \(\lambda_i\to\infty\) is understood in the sense of \textbf{tightening constraints}: as \(\lambda_i\) grows, the corresponding regularization term must vanish to keep the loss finite, provided the optimal loss remains bounded. We assume the feasible set (where all terms are finite) is non‑empty, which is reasonable given sufficient model capacity.

\textbf{Part 1 (Perfect Reconstruction):} The term $\lambda_3 \mathcal{R}_{\text{rec}}^{(m)}$ with $\lambda_3 \rightarrow \infty$ forces the reconstruction error to zero. By the properties of the squared $L2$ norm, \(\displaystyle \lim_{\lambda_3\to\infty} \mathbb{E}\bigl[\|X_m - r_m(Z_m^{\mathrm{inv}},Z_m^{\mathrm{spu}})\|^2\bigr] = 0\) if and only if $X_m = r_m(Z_m^{\text{inv}}, Z_m^{\text{spu}})$ almost surely. Thus, in the limit, the decoder can reconstruct the input with arbitrarily small error. Note that \textbf{exact} zero error may be unattainable for finite-dimensional representations, but the limiting statement suffices for theoretical analysis; in practice, taking \(\lambda_3\) sufficiently large yields negligible reconstruction error. This ensures the disentangled representations $(Z_m^{\text{inv}}, Z_m^{\text{spu}})$ preserve all information in the original modality $X_m$.

The reconstruction constraint $\mathcal{R}_{\text{rec}}^{(m)}$ is crucial for preventing degenerate solutions. It ensures that the disentangled representations form a sufficient statistic for $X_m$, preserving all information while separating causal from non-causal components. This is essential for maintaining performance in the source environments while improving generalization to new environments

\textbf{Part 2 (Environment Independence of Invariant Component):} The term $\lambda_1 \mathcal{R}_{\text{inv}}^{(m)}$ with $\lambda_1 \rightarrow \infty$ forces $\mathcal{D}(P(Y|Z_m^{\text{inv}},E=e_1), P(Y|Z_m^{\text{inv}},E=e_2)) = 0$ for all $e_1,e_2$. By Theorem~\ref{thm:invariant_representations}, this implies $P(Y|Z_m^{\text{inv}},E=e_1) = P(Y|Z_m^{\text{inv}},E=e_2)$ for all $e_1,e_2$.

Now, assume for contradiction that $Z_m^{\text{inv}}$ is not independent of $E$. Then there exist values $z^{\text{inv}}$, $e_1$, $e_2$ such that $P(Z_m^{\text{inv}}=z^{\text{inv}}|E=e_1) \neq P(Z_m^{\text{inv}}=z^{\text{inv}}|E=e_2)$. By the law of total probability:
\begin{align*}
P(Y|E=e_i) =& \int P(Y|Z_m^{\text{inv}}=z^{\text{inv}}) \\
&\times P(Z_m^{\text{inv}}=z^{\text{inv}}|E=e_i) dz^{\text{inv}}
\end{align*}
Since $P(Y|Z_m^{\text{inv}}) = P(Y|Z_m^{\text{inv}})$ but $P(Z_m^{\text{inv}}|E=e_1) \neq P(Z_m^{\text{inv}}|E=e_2)$, we must have $P(Y|E=e_1) \neq P(Y|E=e_2)$. However, in our SCM, $Y$ is causally independent of $E$ given the invariant features $\{X_m^{\text{inv}}\}$, and consequently given $\{Z_m^{\text{inv}}\}$. This contradiction implies $Z_m^{\text{inv}} \perp\!\!\!\perp E$.

\textbf{Part 3  (No additional causal information in \(Z_m^{\mathrm{spu}}\)):}
We prove \(I(Y; Z_m^{\mathrm{spu}} \mid Z_m^{\mathrm{inv}}, E)=0\) by contradiction, using the results from Part 1 and Part 2, the mutual information constraint, and the construction of virtual environments.

Assume, for contradiction, that  
\[
I := I(Y; Z_m^{\mathrm{spu}} \mid Z_m^{\mathrm{inv}}, E) > 0
\]

From Part~2, the invariance constraint gives \(P(Y\mid Z_m^{\mathrm{inv}}, E)=P(Y\mid Z_m^{\mathrm{inv}})\); hence  
\[
I = H(Y\mid Z_m^{\mathrm{inv}}) - H(Y\mid Z_m^{\mathrm{inv}}, Z_m^{\mathrm{spu}}, E) \tag{1}
\]
The inequality \(I>0\) implies  
\[
H(Y\mid Z_m^{\mathrm{inv}}, Z_m^{\mathrm{spu}}, E) < H(Y\mid Z_m^{\mathrm{inv}}). \tag{2}
\]
Thus, conditioning on \((Z_m^{\mathrm{spu}}, E)\) strictly reduces the entropy of \(Y\) compared to conditioning on \(Z_m^{\mathrm{inv}}\) alone. In particular, there exists a measurable set of positive measure on which the conditional distribution \(P(Y\mid Z_m^{\mathrm{inv}}, Z_m^{\mathrm{spu}})\) depends on \(Z_m^{\mathrm{spu}}\) in a non‑degenerate way.

By Part 1 (perfect reconstruction in the limit), the pair \((Z_m^{\mathrm{inv}}, Z_m^{\mathrm{spu}})\) determines \(X_m\) almost surely. Since \(Y\) is a function of the multimodal input, we can write:  
\[
Y = \Psi\bigl(Z_m^{\mathrm{inv}}, Z_m^{\mathrm{spu}}, \eta\bigr)
\]
where \(\eta\) is an independent noise term capturing irreducible uncertainty. The dependence on \(Z_m^{\mathrm{spu}}\) is essential because of (2).

Now consider two different environments \(e_1\) and \(e_2\), because the encoder \(g_m\) is deterministic and the same for all environments, the conditional distribution of \(Z_m^{\mathrm{spu}}\) given \(Z_m^{\mathrm{inv}}\) and \(E\) changes with \(e\). Formally, the map \(e \mapsto P(Z_m^{\mathrm{spu}}\mid Z_m^{\mathrm{inv}}, E=e)\) is injective; i.e., for \(e_1\neq e_2\) we have  
\[
P(Z_m^{\mathrm{spu}}\mid Z_m^{\mathrm{inv}}, E=e_1) \neq P(Z_m^{\mathrm{spu}}\mid Z_m^{\mathrm{inv}}, E=e_2)
\]
in the sense that the two conditional distributions are not equal almost everywhere.

Using the law of total probability, for any environment \(e\),
\[
\begin{split}
&P(Y\mid Z_m^{\mathrm{inv}}, E=e) = \\
&\int\! P(Y\!\mid\! Z_m^{\mathrm{inv}}, Z_m^{\mathrm{spu}}) \; dP(Z_m^{\mathrm{spu}}\!\mid\! Z_m^{\mathrm{inv}}, E=e) 
\end{split}
\tag{3}
\]
Because the environments can vary significantly, the family of mixing distributions \(\{P(Z_m^{\mathrm{spu}}\mid Z_m^{\mathrm{inv}},E=e)\}_{e\in\mathcal{E}}\) is rich enough to distinguish different integrands. In particular, if the integrand \(P(Y\mid Z_m^{\mathrm{inv}},Z_m^{\mathrm{spu}})\) is not constant in \(Z_m^{\mathrm{spu}}\) (which follows from \(I>0\)), then the value of the integral in (3) changes continuously with \(e\) when \(\alpha^{(e)}\) varies. Hence for \(e_1\neq e_2\), the two integrals cannot be equal. Consequently,
\[
P(Y\mid Z_m^{\mathrm{inv}}, E=e_1) \neq P(Y\mid Z_m^{\mathrm{inv}}, E=e_2)
\]
which contradicts Part~2 where we established that \(P(Y\mid Z_m^{\mathrm{inv}}, E=e)\) is constant across all environments. Therefore, our assumption \(I>0\) is false, and we must have
\[
I(Y; Z_m^{\mathrm{spu}} \mid Z_m^{\mathrm{inv}}, E) = 0
\]

Together, these three parts prove that the optimal solution satisfies all three claimed properties.
\end{proof}

\noindent\textbf{Remark on finite \(\lambda\).} The limit \(\lambda_i\to\infty\) is an idealization. In practice, taking \(\lambda_i\) sufficiently large (but finite) yields approximations where each regularization term is bounded by a small tolerance \(\epsilon\), and the conclusions hold up to \(\epsilon\) errors. This is standard in constrained optimization and does not affect the practical validity of the theorem.

\subsection{Distributionally Robust Risk Advantage of Invariant Representations}
\label{thm3_a}
Having established how to obtain causal invariant modality representations, we now prove their theoretical advantages for worst-case out-of-distribution risk under distribution shift.

\textbf{Theorem 3} (Distributionally Robust Risk Advantage of Invariant Representations)
\label{thm:ood_risk_advantage_a}
Let $\mathcal{H}$ be a hypothesis class over $\mathcal{X} \times \mathcal{Y}$, where $\mathcal{X} = \mathcal{X}_1 \times \mathcal{X}_2 \times \dots \times \mathcal{X}_M$ is the $M$-modal feature space and $\mathcal{Y}$ is the label space. Let $\mathcal{E}_{\text{all}}$ denote the set of all possible environments, each corresponding to a distribution $P^e(x, y)$. Let $h_{\text{inv}} \in \mathcal{H}$ be a predictor using invariant representations $Z^{inv} = \{Z_m^{\text{inv}}\}_{m=1}^M$, and $h_{\text{raw}} \in \mathcal{H}$ be a predictor using raw multimodal representations $X = \{X_m\}_{m=1}^M$. Assume:

1. \textbf{Invariance Condition}: The invariant representations satisfy $P^e(Y|Z^{\text{inv}}) = P^{e'}(Y|Z^{\text{inv}})$ for all $e,e' \in \mathcal{E}_{\text{all}}$.

2. \textbf{Information Sufficiency}: The mutual information between invariant representations and raw features satisfies $I(Z^{\text{inv}}; X) > c$ for some constant $c > 0$ ($Z^{\text{inv}}$ contains enough information related to $X$).

3. \textbf{Loss Function Regularity}: The loss function $\ell$ is $L$-Lipschitz continuous and bounded.

Then the worst-case out-of-distribution risk satisfies:
\begin{equation*}
R^{\text{OOD}}(h_{\text{inv}}) < R^{\text{OOD}}(h_{\text{raw}})
\end{equation*}
where $R^{\text{OOD}}(h) = \max_{e \in \mathcal{E}_{\text{test}}} R^e(h)$ and $R^e(h) = \mathbb{E}_{(x,y)\sim P^e}[\ell(h(x), y)]$.

\begin{proof}
We prove the theorem through four key steps, with additional explanations regarding the validity of assumptions and practical implications.

\textbf{Step 1 (Information Retention Property):} 

The information sufficiency assumption $I(Z^{\text{inv}}; X) > c$ is typically reasonable in practice because invariant features often capture fundamental aspects of data that remain stable across environments. For instance, in vision-language tasks, semantic content tends to remain consistent even when visual appearance changes. Formally, by the mutual information chain rule:
\begin{align*}
I(Y; X) &= I(Y; Z^{\text{inv}}, Z^{\text{spu}}) = I(Y; Z^{\text{inv}})\\
&+ I(Y; Z^{\text{spu}} | Z^{\text{inv}}) - I(Z^{\text{inv}}; Z^{\text{spu}} | Y)
\end{align*}

Since $I(Y; Z^{\text{spu}} | Z^{\text{inv}}) \leq H(Z^{\text{spu}} | Z^{\text{inv}})$ and $I(Z^{\text{inv}}; Z^{\text{spu}} | Y) \geq 0$, we have:
\begin{align*}
I(Y; X) &\leq I(Y; Z^{\text{inv}}) + H(Z^{\text{spu}} | Z^{\text{inv}}) \\
&= I(Y; Z^{\text{inv}}) + H(X | Z^{\text{inv}}) \\
&\leq I(Y; Z^{\text{inv}}) + H(X) - I(X; Z^{\text{inv}})
\end{align*}

Rearranging terms:
\begin{equation*}
I(Y; Z^{\text{inv}}) \geq I(Y; X) - (H(X) - I(X; Z^{\text{inv}}))
\end{equation*}

Setting $\epsilon(c) = H(X) - I(X; Z^{\text{inv}})$, when $I(X; Z^{\text{inv}}) > c$, we have $\epsilon(c) < H(X) - c$, ensuring that $I(Y; Z^{\text{inv}})$ approaches $I(Y; X)$ as $c$ increases.
This indicates that the invariant representations $Z^{\text{inv}}$ retain most of the predictive information about $Y$ that is present in the raw features $X$.

Moreover, regrading assumption 3, in real-world multimodal learning, the Lipschitz continuity assumption is widely applicable. For classification tasks using cross-entropy loss, when input features are normalized (as is common practice), the loss becomes Lipschitz continuous. Similarly, mean squared error loss for regression tasks is inherently Lipschitz continuous. This regularity ensures stable optimization and meaningful generalization bounds.
Since $\ell$ is assumed to be $L$-Lipschitz continuous, the risk gap is bounded \cite{songlearning}:
\begin{equation*}
R^e(h^*_{\text{inv}}) \leq R^e(h^*_{\text{raw}}) + L \cdot \epsilon(c)
\end{equation*}
where $h^*_{\text{inv}}$ and $h^*_{\text{raw}}$ are the optimal predictors using invariant and raw representations respectively. This shows that when $c$ is sufficiently large, $Z^{\text{inv}}$ almost completely preserves the information needed for prediction.


\textbf{Step 2 (Environment Invariance Property):} The invariance condition is realistic in many applications where causal factors remain stable despite environmental changes. For example, in object recognition, shape and category remain invariant while lighting, pose, and background may vary. This property ensures that $h_{\text{inv}}$ exhibits consistent performance across environments:
\begin{equation*}
R^e(h_{\text{inv}}) = R^{e'}(h_{\text{inv}}) = C, \quad \forall e,e' \in \mathcal{E}_{\text{all}}
\end{equation*}
where $C$ is a constant. Consequently:
\begin{equation*}
R^{\text{OOD}}(h_{\text{inv}}) = \max_{e \in \mathcal{E}_{\text{test}}} R^e(h_{\text{inv}}) = C
\end{equation*}
This consistency is a key advantage of invariant representations in distribution shift scenarios.

\textbf{Step 3 (Risk Variability of Raw Representations):} For predictor $h_{\text{raw}}$, the risk variability $\sigma = \max_{e,e' \in \mathcal{E}_{\text{test}}} |R^e(h_{\text{raw}}) - R^{e'}(h_{\text{raw}})|$ is typically large in real-world applications where distribution shifts are significant. This variability stems from the model's reliance on features that change with environment.

Let $R_{\max} = \max_{e \in \mathcal{E}_{\text{test}}} R^e(h_{\text{raw}})$ and $R_{\min} = \min_{e \in \mathcal{E}_{\text{test}}} R^e(h_{\text{raw}})$. The average risk $\bar{R} = \mathbb{E}_{e \in \mathcal{E}_{\text{test}}}[R^e(h_{\text{raw}})]$ satisfies:
\begin{align*}
R_{\max} - \bar{R} &= \frac{1}{|\mathcal{E}_{\text{test}}|} \sum_{e \in \mathcal{E}_{\text{test}}} (R_{\max} - R^e(h_{\text{raw}})) \\
&\geq \frac{R_{\max} - R_{\min}}{|\mathcal{E}_{\text{test}}|} = \frac{\sigma}{|\mathcal{E}_{\text{test}}|}
\end{align*}
which implies the existence of environment $e^*$ such that:
\begin{equation*}
R^{e^*}(h_{\text{raw}}) \geq \bar{R} + \frac{\sigma}{|\mathcal{E}_{\text{test}}|}
\end{equation*}
In practice, $\sigma$ is often substantial when dealing with significant domain shifts, making this inequality practically relevant.

\textbf{Step 4 (Worst-Case Risk Comparison):} This step reveals why invariant predictors typically outperform raw predictors in worst-case scenarios. The generalization errors $\delta_{\text{raw}}$ and $\delta_{\text{inv}}$ represent the gap between training and testing performance.

Notably, $\delta_{\text{inv}}$ is typically smaller than $\delta_{\text{raw}}$ because invariant features generalize better across environments. In contrast, $\sigma$ is often large in real-world scenarios with significant distribution shifts. For example, in cross-domain sentiment analysis, performance can vary dramatically between domains (e.g., movie reviews vs. product reviews), resulting in large $\sigma$ values.

From Steps 1 and 2:
\begin{equation*}
R^{\text{OOD}}(h_{\text{inv}}) = C \leq R^e(h^*_{\text{raw}}) + L \cdot \epsilon(c), \quad \forall e \in \mathcal{E}_{\text{test}}
\end{equation*}

By empirical risk minimization theory, standard generalization bounds apply:
\begin{align*}
R^e(h_{\text{inv}}) &\leq R^e(h^*_{\text{inv}}) + \delta_{\text{inv}}, \quad \forall e \in \mathcal{E}_{\text{test}}
\end{align*}
where $\delta_{\text{inv}}$ can be very small by realization using an expressive and suitable neutral network for the predictor, which decreases with the number of training samples and depend on model complexity.
Moreover, as $h^*_{\text{raw}}$ is the optimal predictor for raw features, we have:
\begin{align*}
R^e(h_{\text{raw}}) &\geq R^e(h^*_{\text{raw}}), \quad \forall e \in \mathcal{E}_{\text{test}} 
\end{align*}

From Step 3:
\begin{align*}
R^{\text{OOD}}(h_{\text{raw}}) &\geq R^{e^*}(h_{\text{raw}}) \\
&\geq \bar{R} + \frac{\sigma}{|\mathcal{E}_{\text{test}}|} \\
&\geq \mathbb{E}_{e \in \mathcal{E}_{\text{test}}}[R^e(h^*_{\text{raw}})] + \frac{\sigma}{|\mathcal{E}_{\text{test}}|}
\end{align*}

Similarly:
\begin{align*}
R^{\text{OOD}}(h_{\text{inv}}) &= C \\
&= \mathbb{E}_{e \in \mathcal{E}_{\text{test}}}[R^e(h_{\text{inv}})] \\
&\leq \mathbb{E}_{e \in \mathcal{E}_{\text{test}}}[R^e(h^*_{\text{raw}})] + L \cdot \epsilon(c) + \delta_{\text{inv}}
\end{align*}

Combining these results:
\begin{align*}
&R^{\text{OOD}}(h_{\text{raw}}) - R^{\text{OOD}}(h_{\text{inv}}) \\
&\geq \left(\mathbb{E}_{e \in \mathcal{E}_{\text{test}}}[R^e(h^*_{\text{raw}})] + \frac{\sigma}{|\mathcal{E}_{\text{test}}|}\right) \\
&\quad - \left(\mathbb{E}_{e \in \mathcal{E}_{\text{test}}}[R^e(h^*_{\text{raw}})] + L \cdot \epsilon(c) + \delta_{\text{inv}}\right) \\
&= - L \cdot \epsilon(c) - \delta_{\text{inv}} + \frac{\sigma}{|\mathcal{E}_{\text{test}}|}
\end{align*}

The inequality $R^{\text{OOD}}(h_{\text{raw}}) > R^{\text{OOD}}(h_{\text{inv}})$ holds when:
\begin{equation*}
\sigma > |\mathcal{E}_{\text{test}}|( L \cdot \epsilon(c) + \delta_{\text{inv}})
\end{equation*}

This condition is typically satisfied in practice because:
\begin{itemize}
    \item $\sigma$ tends to be large in real-world domain shifts, as environments often differ significantly in their distributional properties.
    \item $\delta_{\text{inv}}$ is small because invariant features generalize well across environments and we can adopt proper realization of the predictor $h$ to reduce $\delta_{\text{inv}}$.
    \item $\epsilon(c)$ can be made small by ensuring $Z^{\text{inv}}$ captures sufficient information from $X$.
\end{itemize}


This proves that under realistic conditions, predictors based on invariant representations achieve strictly lower worst-case out-of-distribution risk than those using raw representations.
\end{proof}

\section{Unimodal Networks} \label{unimodal_a}


This section outlines the architecture of our unimodal networks and elaborates on the steps for generating unimodal representations, which serve as the foundation for subsequent causal analysis. To make a fair comparison, following established practices in recent work \cite{ithp,mgcl}, we utilize pre-trained language models \cite{deberta,lan2019albert} to derive high-quality textual features. The following steps outline the language network's workflow for all downstream tasks
\begin{equation}
\setlength{\abovedisplayskip}{3pt}
\setlength{\belowdisplayskip}{3pt}
\label{eq6}
\begin{split}
   \bm{\hat{X}}_{l}&=\text{PLM}(\bm{U}_l;\ \theta_{l}) \in \mathbb{R}^{T_l \times d_l }\\
     \bm{X}_{l}  &=  ( \bm{\hat{X}}_{l} \bm{W}_{pro} + \bm{b}_{pro} ) \in \mathbb{R}^{T_l \times d }\\
\end{split}
\end{equation}
where PLM indicates the pre-trained language model, $\bm{U}_l $ is the input token sequence and $T_l$ represents the sequence length. $\bm{W}_{pro} \in \mathbb{R}^{d_l\times d}$ and $\bm{b}_{pro}\in \mathbb{R}^{1\times d}$ are trainable parameters that map the output dimensionality of the language network to the shared feature dimensionality $d$.  
In MSA, the acoustic and visual networks employ transformer encoders \cite{transformer} and operate according to the following steps ($m\in \{a,v\}$):
\begin{equation}
\setlength{\abovedisplayskip}{3pt}
\setlength{\belowdisplayskip}{3pt}
\label{eq7}
\begin{split}
&\bm{\hat{X}}_{m}=\operatorname{Conv} 1 \mathrm{D} ( \bm{U}_{m};\ K_m ) \in \mathbb{R}^{T_m \times d}\\
  & \bm{X}_{m} = \text{Transformer}(\bm{\hat{X}}_{m};\ \theta_m) \in \mathbb{R}^{ T_m \times d}
\end{split}
\end{equation} 
where $\bm{U}_{m} \in \mathbb{R}^{T_m \times d_m}$ is the extracted raw feature sequence (see Section~\ref{sec:fea_detail} for the extraction derails), $\operatorname{Conv} 1 \mathrm{D}$ indicates the temporal convolution whose kernel size $K_m$ is set to 3. Nota that for the CH-SIMS-v2 dataset, we use the same feature set as in previous works \cite{kuda,simsv2}, which are feature vectors instead of sequences. Therefore, we simply use multi-layer perception networks as the unimodal networks for visual and acoustic modalities.

In the MHD and MSD tasks, to more effectively model humor-specific cues, we follow prior work \cite{HKT} by extracting an additional Humor-Centric Feature (HCF) from the language modality. This HCF serves as a fourth modality and is represented as $\bm{U_h} \in \mathbb{R}^{T_l \times d_h}$ (detailed in \cite{HKT}). In addition, each data sample in the MHD and MSD tasks includes both a target punchline segment and its preceding context. We merge the feature sequences of the punchline and context along the temporal axis to construct the unimodal input representations
$\bm{U_m} \in \mathbb{R}^{ T_m \times d_m} \  (m\in \mathcal{M}= \{a, v, l, h\})$.
The unimodal network for the HCF modality is analogous to the framework employed for the visual and acoustic modalities. The specific steps of the transformer-based unimodal networks for MHD and MSD are delineated as follows ($m\in \{a,v,h\}$):
\begin{equation}
\setlength{\abovedisplayskip}{3pt}
\setlength{\belowdisplayskip}{3pt}
\label{eq8}
\begin{split}
 \bm{\hat{X}}_{m}\!&=\!\text{Transformer}(\bm{U}_m ;\ \theta_m )\in \mathbb{R}^{ T_m \times d_m} \\
    \bm{X}_{m} &=  \operatorname{Conv} 1 \mathrm{D}( \bm{\hat{X}}_{m};\ K_m) \in \mathbb{R}^{ T_m \times d}\\
\end{split}
\end{equation}
Finally, we employ a straightforward linear layer to fuse the language and HCF modalities, thereby reducing complexity for the following model stages:
\begin{equation}
\setlength{\abovedisplayskip}{3pt}
\setlength{\belowdisplayskip}{3pt}
\label{eq8_2}
\begin{split}
 \bm{X}_l\longleftarrow\text{Linear}(\bm{X}_l \oplus \bm{X}_h &;\ \theta_{lin} )\in \mathbb{R}^{ T_l \times d}
\end{split}
\end{equation}

For all unimodal representations $\bm{X}_m\in \mathbb{R}^{ T_m \times d}$, we perform mean pooling at the time dimension to obtain the final unimodal representations $X_m\in \mathbb{R}^{ 1 \times d}$.

\section{Datasets}



(1) \textbf{CMU-MOSI} \cite{zadeh2016multimodal}:
This dataset is a standard benchmark for MSA, encompassing more than 2,000 online video clips collected from the Internet. Each clip is labeled with a sentiment score on a -3 to 3 Likert scale, where 3 and -3 denote extreme positive and negative sentiments, respectively.

(2) \textbf{The OOD version of CMU-MOSI}~\cite{CLUE2022}: CMU-MOSI (OOD) is built using a modified simulated annealing algorithm~\cite{aarts1987simulated}, which iteratively adjusts the test distribution. The resulting significant shifts in word–sentiment correlations relative to the training set establish it as a challenging benchmark for evaluating model robustness to distribution shifts in MSA.


(3) \textbf{CMU-MOSEI} \cite{MOSEI}: 
The CMU-MOSEI dataset is a large-scale, widely-adopted benchmark for MSA, collected from online videos. Its key characteristics include: (I) Scale: over 22,000 video clips; (II) Source: more than 1,000 YouTube speakers and 250+ topics, randomly sampled; (III) Annotation: each clip has two labels—a six-class emotion category and a sentiment score ranging from -3 (strongly negative) to 3 (strongly positive). For the MSA task, our evaluation adopts the sentiment labels of CMU-MOSEI, which are consistent with the scale used in CMU-MOSI.

(4) \textbf{CH-SIMS-v2} \cite{simsv2}: 
CH-SIMS-v2 serves as a Chinese MSA benchmark with the following characteristics: (I) Source: Videos collected from 11 scenarios (interviews, talk shows, films, etc.) to mimic real-world interaction; (II) Quality: Filtered to retain high-quality acoustic and visual streams; (III) Split: Partitioned into training, validation, and test sets in a 9:2:3 ratio, corresponding to 2,722, 647, and 1,034 segments respectively; (IV) Label Distribution: The training set contains 921 negative, 433 weakly negative, 232 neutral, 318 weakly positive, and 818 positive samples.


(5) \textbf{UR-FUNNY} \cite{ur_funny}: 
Derived from TED talk videos involving 1,741 speakers, UR-FUNNY serves as a benchmark for multimodal humor detection (MHD). Each data sample includes a multimodal punchline segment and its preceding context segments, the latter being provided to support contextual modeling. The dataset is built by identifying punchlines via the laughter tag in transcripts. Video segments followed by laughter are treated as positive samples, while those without laughter form negative samples. It is partitioned into 7,614 training, 980 validation, and 994 test instances.


(6) \textbf{MUStARD} \cite{msd}: 
The MUStARD dataset is designed for multimodal sarcasm detection (MSD), comprising video segments sourced from popular TV series including Friends, The Big Bang Theory, The Golden Girls, and Sarcasmaholics. The collection contains 690 human-annotated segments, labeled as either sarcastic or non-sarcastic. Similar to UR-FUNNY, it provides contextual clues by incorporating both the target punchline and the preceding dialogue segments for each sample.

\section{Evaluation Metrics}


For CMU-MOSI and CMU-MOSEI datasets, we adopt the following evaluation metrics: (1) \textbf{Acc7}: the accuracy of classifying sentiment scores into seven discrete classes; (2) \textbf{Acc2}: the binary accuracy for differentiating between positive and negative sentiments; (3) \textbf{F1 score}: a harmonic mean that balances precision and recall for binary sentiment classification; (4) \textbf{MAE}: the mean absolute error between model predictions and sentiment labels; and (5) \textbf{Corr}: the correlation coefficient reflecting the strength and direction of the relationship between  predictions and sentiment labels. For Acc7, predictions are rounded to the nearest integer within the scale from -3 to 3. When calculating Acc2 and F1 score, neutral segments are not considered. And the neutral segments are included in the calculations of MAE, Corr, and Acc7. For the CH-SIMS-v2 dataset, we use Acc5, Acc3, Acc2, F1 score, MAE, and Corr as in previous works \cite{simsv2,kuda}. For the MHD and MSD tasks, we report the binary accuracy (i.e., humorous or non-humorous, sarcastic or non-sarcastic) of the model. 
 
\section{Feature Extraction Details}\label{sec:fea_detail}

Regarding the \textbf{visual modality}, following previous approaches \cite{hycon,ithp}, 
Facet \footnote{iMotions 2017. https://imotions.com/} is used to gather an array of visual attributes such as facial action units and facial landmarks for the MSA task. Facial feature extraction for the CH-SIMS-v2 dataset aligns with established practice \cite{kuda,simsv2}, utilizing OpenFace \cite{baltrusaitis2018openface} to obtain measures such as 68 facial landmarks, 17 action units, head pose, head orientation, and eye gaze direction.
For MHD and MSD, in line with prior approaches \cite{HKT,mcl}, OpenFace 2 \cite{baltrusaitis2018openface} is utilized for the extraction of facial action unit features as well as rigid and non-rigid facial shape parameters.
For \textbf{acoustic modality}, COVAREP \cite{Degottex2014COVAREP} is used for the extraction of a sequence of acoustic features, including 12 Mel-frequency cepstral coefficients, pitch tracking, speech polarity, etc. For the CH-SIMS-v2 dataset, acoustic features are represented as 25-dimensional eGeMAPS low-level descriptors (LLD), extracted via OpenSmile \cite{eyben2010opensmile} at 16 kHz.
For \textbf{language modality}, following state-of-the-art methods \cite{ithp}, DeBERTa \cite{deberta} and BERT \cite{bert} are employed to learn informative language representations. For the CH-SIMS-v2 dataset, we follow prior works \cite{kuda,simsv2} and utilize BERT \cite{bert} to obtain textual features.
For MHD and MSD, following prior methods \cite{mgcl,mcl}, ALBERT \cite{lan2019albert} is adopted. 

The input feature dimensionality for each modality is summarized in Table~\ref{tfeat}. 

\begin{table}
\centering
 \caption{ \label{tfeat}The unimodal feature dimensionality of different datasets.
 }
 \vspace{-0.cm}
\resizebox{1.\columnwidth}{!}{\begin{tabular}{c|c|c|c|c}
 \noalign{\hrule height 1pt} 
 \rowcolor{lightgray!40}
    & Language & Acoustic & Visual & HCF \\
 \hline
   CMU-MOSI & 768 & 74 &  47 & -  \\
   CMU-MOSI (OOD) & 768 & 74 &  47 & -  \\
   CMU-MOSEI & 768  & 74 & 35 & - \\
   CH-SIMS-v2 & 768 & 25 & 177 & - \\
   UR-FUNNY & 768 & 60 & 36 & 4  \\
   MUStARD & 768 & 60 & 36 & 4  \\
  \noalign{\hrule height 1pt} 
 \end{tabular}}
\end{table}%

\begin{figure}
\centering
\includegraphics[width=1.0\linewidth]{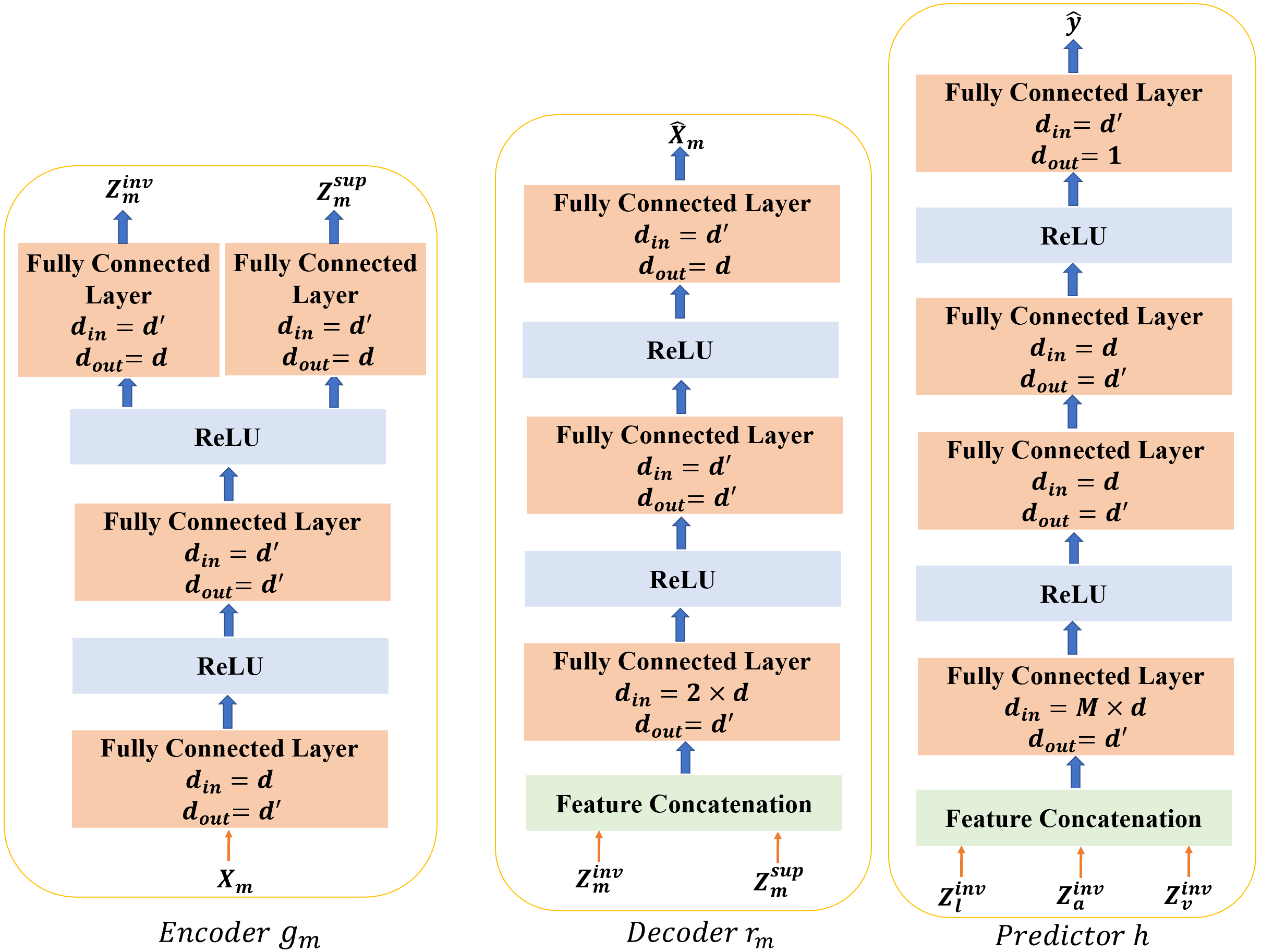}
\caption{\label{structure}The structures of encoder, decoder, and  predictor. $d^{'}$ represents the hidden dimensionality. } 
\end{figure}

\section{Experimental Details}\label{sec:exper_detail}

(1) \textbf{Hyperparameter Setting}: 
Our proposed CmIR is developed with PyTorch 1.13.1 on an NVIDIA RTX3090 GPU (CUDA 11.6). Training utilizes the AdamW optimizer \cite{loshchilov2017decoupled}. In line with prior work \cite{hycon}, optimal hyperparameters are determined through an extensive random grid search of 50 iterations on the validation set. Using the identified best configuration, the model is retrained five times, and the final performance is averaged across these runs. The specific hyperparameter settings can be found in Table~\ref{t2223}. Note that the modality-specific covariance matrix $\Sigma_m$ in Eq.~\ref{eq_inva} is set as the identity matrix.

The structures of the predictive attention weight generator and the predictor are shown in Figure~\ref{structure}. 

\begin{table*}[t]
\centering
 \caption{ \label{t2223}Hyperparameter Settings of CmIR. MAE, MSE and BCE denote mean absolute error, mean square error and binary cross-entropy, respectively.}
 \resizebox{2.0\columnwidth}{!}{\begin{tabular}{c|c|c|c|c|c|c}
  \noalign{\hrule height 1pt} 
\rowcolor{lightgray!40}
     & CMU-MOSI & CMU-MOSEI & CH-SIMS-v2 & MUStARD & UR-FUNNY & CMU-MOSI (OOD) \\
 \hline
 Loss Function  &  MSE & MSE & MAE &  BCE & BCE & MSE \\
  Batch Size  &  48 & 50 &  50 & 32 & 64 & 48 \\
  Learning Rate  & 1e-5  & 1e-5 & 5e-4 &  2e-5 & 7e-6 & 1e-5 \\
 Number of Environments $K$ & 1 & 5 & 2 &  4 & 2 & 4 \\
 Noise Intensity $\alpha^{(1)}$ & 0.1 & 1 & 0.1 &  0.1 & 0.1 & 0.1 \\
   Weight $\lambda_1$ & 0.1 & 0.1 & 0.01 &  0.05 & 0.001 & 0.01 \\
   Weight $\lambda_2$ & 0.001 & 0.1 & 0.01 & 0.01 & 0.01 & 0.01 \\
   Weight $\lambda_3$ & 0.05 & 0.01 & 0.1 & 0.05 & 0.01 & 0.01 \\
    Shared Dimensionality $d$ &  150 & 150 & 100 & 128 & 120 & 150 \\
   Hidden Dimensionality $d^{'}$ & 256 & 128 & 100 & 48 & 120 & 150 \\
   Weight $\alpha$ in Mutual Information Constraint &  1 & 1 & 0.01 & 0.1 & 0.1 & 1 \\
  \noalign{\hrule height 1pt} 
 \end{tabular}}
\end{table*}%

(2) \textbf{Protocol for the evaluation of noisy modalities}: To evaluate model robustness with noisy modalities, we produce corrupted features by employing the equation below:
\begin{equation}
\setlength{\abovedisplayskip}{2pt}
\setlength{\belowdisplayskip}{2pt}
\label{con1_4}
 X^{n}_{m} = (1 - NR)  \cdot X_{m} + NR \cdot \mathcal{\bm{N}}
\end{equation}
where $X_{m}$ is the unimodal representation, $NR$ is the noisy rate ranging from 0.1 to 0.7, $\mathcal{\bm{N}}$ is the Gaussian noise data of mean \textbf{0} and variance \textbf{1}, and $X^{n}_{m}$ is the noisy unimodal representation that is used to learn invariant representation. To simulate realistic noise, the described noise mixing process is applied to every modality across all samples. Perturbations are introduced at the feature level because input noise ultimately propagates to feature representations. This approach aligns with common practice in MAC, where a standardized feature set is typically used to ensure fairness and protect privacy, making feature-level noise injection both reasonable and practical. For a fair comparison, all baselines \cite{niat,MIB,10224356} are reproduced using the same training and testing protocols as our CmIR.

To further assess the robustness of CmIR under out-of-distribution (OOD) noise conditions (Table~\ref{xxx}), we evaluate its performance against two additional corruption types: Laplace noise and random erasing noise (the latter simulates data loss by randomly zeroing out a subset of features). During this evaluation, each sample has an equal 50\% chance of being corrupted either by Laplacian noise via Eq.~\ref{con1_4} or by random feature dropout (missing), where the dropout rate is controlled by the noise ratio $NR$.

\section{Baselines}

The causality-based baselines include:

(1)  \textbf{Subject Causal Intervention 
} (\textbf{SuCI}) \cite{xu2025debiased}: It introduces a simple yet effective causal intervention module designed to decouple the influence of subjects as unobserved confounders, thus obtaining unbiased predictions through true causal effects;
(2)  \textbf{CounterfactuaL mUltimodal sEntiment}  (\textbf{CLUE})~\citep{CLUE2022}:  It leverages causal inference and counterfactual reasoning to prune away spurious direct textual influences, preserving only the genuine indirect multimodal effects, thereby strengthening generalization to out-of-distribution data;
(3) \textbf{General dEbiAsing fRamework} (\textbf{GEAR})~\citep{sun2023general}: To enhance out-of-distribution robustness, it distinguishes robust features from biased ones, quantifies sample-level bias, and employs inverse probability weighting to de-emphasize highly biased samples;
(4) \textbf{Multimodal Debiasing Framework} (\textbf{MulDeF})~\citep{MulDeF2024}: It integrates causal intervention with front-door adjustment and multimodal causal attention during the training phase. At inference, it applies counterfactual reasoning to mitigate both verbal and nonverbal biases, which enhances out-of-distribution generalization.;
(5) \textbf{Attention-based Causality-Aware FusionAttention-based Causality-Aware Fusion} (\textbf{AtCAF})~\citep{huang2025atcaf}:  It learns causality-aware multimodal representations for sentiment analysis through a dedicated text debiasing module and counterfactual attention across modalities.

The baselines for handling noisy modality include:

(1) \textbf{Multimodal Boosting} \cite{10224356}: It is built upon multiple base learners organized in a boosting-like manner, with each learner addressing different facets of the multimodal data. It is further equipped with a contribution learning module that dynamically estimates the contribution and noise degree of individual learners;
(2) \textbf{Complete Multimodal Information Bottleneck} (\textbf{C-MIB}) \cite{MIB}: It adopts the information bottleneck principle to eliminate redundancy and noise from both unimodal and multimodal features, thus establishing it as a baseline for handling noisy modalities.;
(3) \textbf{Two-stage Modality Denoising and
Complementation} (\textbf{TMDC}) \cite{zhuang2025tmdc}: Its training process involves two distinct stages. The first, the intra-modality denoising stage, aims to enhance representational robustness by using denoising modules to obtain clean modality-specific and shared representations from complete data, thereby reducing noise interference. The second, the inter-modality complementation Stage, utilizes these representations to address modality absence through cross-modal compensation.

The additional baselines for MSA include:

(1)  \textbf{Information-Theoretic Hierarchical Perception} (\textbf{ITHP}) \cite{ithp}:  Its design is grounded in the information bottleneck principle, where one modality is designated as the core, while others act as detectors within the information pathway to distill and refine the information flow;
(2) \textbf{Self-Supervised Multi-task
Multimodal sentiment analysis network} (\textbf{Self-MM}) \cite{mmsa}: This approach employs a self-supervised strategy to infer sentiment labels for individual modalities by leveraging the global labels of multimodal samples, thereby learning more discriminative unimodal representations; 
(3) \textbf{Disentangled-Language-Focused Model} (\textbf{DLF}) \cite{wang2025dlf}: It introduces a feature disentanglement module to separate shared and specific information across modalities. The process is further refined by four geometric measures to reduce redundancy and prioritize language-focused features. A language-targeted attractor is also designed to enhance language representations using complementary information from other modalities; 
(4) \textbf{Gradient and Structure Consistency} (\textbf{GSCon}) \cite{shi2025gradient}: It proposes a balanced gradient direction
that aligns each modality’s optimization direction to ensure
unbiased convergence, and aligns the spatial structure of samples in different modalities to avoid the interaction noise caused by multimodal alignment;
(5) \textbf{Diffusion Bridge} \cite{lee2025diffusion}: It directly reduces the modality gap by leveraging denoising diffusion probabilistic models;
(6) \textbf{Enhanced Dynamic Emotion Experts} (\textbf{EMOE}) \cite{EMOE2025}: The framework comprises a mixture of modality experts that dynamically adjust modality importance based on input features, coupled with a unimodal distillation mechanism to preserve the predictive capacity of individual modalities within the fused representation;
(7) \textbf{Contrastive FEature DEcomposition} (\textbf{ConFEDE})~\cite{confede}: It enhances multimodal representations by jointly performing contrastive learning and contrastive decomposition of features;
(8) \textbf{Kolmogorov–Arnold Network with Multimodal Clean Pareto} \textbf{KAN-MCP}~\cite{KAN-MCP2025}: It combines interpretable cross-modal modeling via KANs with feature denoising and compression based on DRD-MIB, yielding discriminative multimodal inputs while alleviating modality imbalance;
(9) \textbf{Multimodal Adaptation Gate BERT/ALBERT} (\textbf{MAG-BERT/MAG-ALBERT}) \cite{MAG-BERT}: It incorporates visual and acoustic information into BERT/ALBERT through a dedicated multimodal adaptation gate;
(10) \textbf{Acoustic Visual Mix-up Consistent} (\textbf{AV-MC}) \cite{simsv2}: This method utilizes modality mix-up to augment visual and acoustic representations, thereby strengthening their contribution to sentiment analysis;
(11) \textbf{Knowledge-Guided Dynamic Modality Attention Fusion} (\textbf{KUDA}) \cite{kuda}: It guides the multimodal fusion process with external emotional knowledge, dynamically selecting the dominant modality and adjusting the weighting of all modalities;
(12) \textbf{MISA} \cite{MISA}: It decomposes each unimodal input into modality-invariant and modality-specific components, which are subsequently fused for final prediction;
(13) \textbf{MultiModal InfoMax} (\textbf{MMIM}) \cite{MMIM}: In MMIM, representation learning is enhanced by maximizing the mutual information both between unimodal features and between multimodal representations at different levels and their unimodal counterparts;
(14) \textbf{Feature-Disentangled Multimodal Emotion Recognition} (\textbf{FDMER}) \cite{yang2022disentangled_FDMER}: It learns the common and private feature representations for each modality, which achieves the modality consistency and disparity constraints by designing tailored losses for modality-invariant and modality-specific subspaces.

The additional baselines for MHD and MSD include:

(1) \textbf{Multimodal Global Contrastive Learning} (\textbf{MGCL}) \cite{mgcl}: 
MGCL applies supervised contrastive learning to multimodal representations, employing diverse augmentation strategies to construct positive and negative sample pairs for each representation;
(2) \textbf{Multimodal Correlation Learning} (\textbf{MCL}) \cite{mcl}: MCL formulates a supervised correlation learning objective that preserves modality-specific characteristics while fostering a more discriminative joint embedding space;
(3) \textbf{Multimodal Multitask Interaction Learning} (\textbf{MIL}) \cite{MIL}: It performs joint sarcasm and sentiment detection, integrating a cross-modal target attention mechanism to align textual with visual/acoustic content and a multimodal interaction module to model the shared and distinct patterns of both tasks;
(4) \textbf{Decoupled Multimodal Distillation} (\textbf{DMD}) \cite{li2023decoupled_cvpr}: DMD enables adaptive cross-modal knowledge transfer by decoupling unimodal features into modality-irrelevant and modality-exclusive subspaces, followed by a specialized graph distillation unit to handle each subspace effectively;
(5) \textbf{Humor Knowledge Enriched Transformer} (\textbf{HKT}) \cite{HKT}: HKT incorporates humor-centric features as external knowledge to resolve the ambiguity and leverage the subtle sentiment cues within the language modality;
(6) \textbf{Multimodal Transformer} (\textbf{MulT}) \cite{MULT}: MulT leverages stacked cross-modal transformers to project and align source modalities to a target modality, effectively mitigating the inter-modal gap.

\begin{table*}[t]
	\centering
	\caption{ \label{t_u2}Further discussions on different invariance constraints.}
	\resizebox{2.0\columnwidth}{!}{\begin{tabular}{c|c|c|c|c|c|c|c|c|c|c|c}
     \noalign{\hrule height 1pt} 
\rowcolor{lightgray!40}
    &\multicolumn{5}{c|}{CMU-MOSI} & \multicolumn{5}{c|}{CMU-MOSEI} & MUStARD \\
			\hline
            \rowcolor{lightgray!40}
			& Acc7  & Acc2 & F1 score & MAE & Corr & Acc7  & Acc2 & F1 score & MAE & Corr & Acc \\
			\hline
			Standard  & 48.4 & \textbf{90.1} & \textbf{90.1} & \textbf{0.616} & \textbf{0.855} & \textbf{55.2} & 87.4 & 87.3 & 0.523 & 0.781 & 79.7 \\
            Ours  & \textbf{49.8} & 89.6 & 89.5 & \textbf{0.616} & 0.853 & 55.1 & \textbf{87.8} &   \textbf{87.7} & \textbf{0.513} & \textbf{0.793} & \textbf{80.0} \\
			\noalign{\hrule height 1pt}
	\end{tabular}}
 \vspace{-0.3cm}
\end{table*}%

\section{Comparison between Different Invariance Constraints}
\label{sec:comparison_invariant}

In Section~\ref{sec:invariant}, we propose two different methods to realize invariance constraint. The first one is a standard way to compute and minimize the conditional distribution distance (we directly minimize unimodal predictions from different environments for regression tasks, and use KL-divergence to minimize conditional distribution distance for classification tasks), which needs to implement a unimodal predictor for each modality and generate unimodal prediction. The second one is to directly minimize the distance between different causal invariant representations extracted under different environments, which is adopted in our CmIR.  Here we provide a comparison between these two strategies, and the results are shown in Table~\ref{t_u2}. From Table~\ref{t_u2} we can infer that these two strategies actually yield comparable outcomes, likely because although the first method can more directly evaluate the differences between conditional distributions, it introduces a unimodal predictor. However, when the unimodal predictor is insufficiently trained, it may introduce evaluation errors. Therefore, we adopt the second approach, which enables a more rigorous constraint without introducing a unimodal predictor, thereby reducing model complexity.

\begin{table*}[t]
\centering
\small
   \caption{ \label{t_com} Model complexity analysis on the MUStARD and CMU-MOSI datasets.
 } 
\resizebox{\linewidth}{!}{\begin{tabular}{ccc|cccc}
  \noalign{\hrule height 1pt} 
\rowcolor{lightgray!40}
    \multicolumn{3}{c|}{MUStARD} & \multicolumn{4}{c}{CMU-MOSI} \\
    \hline
 \rowcolor{lightgray!40}
 Model  & Acc2 & Parameters (M) &  Model  & Acc2 & Parameters  (M) & FLOPs (G) \\
 \hline
HKT \cite{HKT} & 76.47 & 17.10M & EMOE \cite{EMOE2025} & 84.8 & 317.30M & 10.02\\
MCL \cite{mcl} &  \underline{77.94}  & \underline{13.83M} & GSCon \cite{shi2025gradient} &  \underline{88.1} & 248.52M & 12.22 \\
 MGCL \cite{mgcl} & \underline{77.94} &  14.28M & Diffusion Bridge \cite{lee2025diffusion} & 86.9 & \textbf{185.46M} & 11.52 \\
  \hline
  \rowcolor[HTML]{EBFAFF}
CmIR  & \textbf{80.00} &  \textbf{13.44}  & CmIR & \textbf{89.6} & \underline{187.51M} & 8.95 \\
 \noalign{\hrule height 1pt} 
 \end{tabular}}
\end{table*}%

\section{Model Complexity Analysis}

The proposed CmIR does not design complex multimodal fusion to explore sufficient inter-modal interactions, but merely introduces an encoder and a decoder to learn invariant modality representations for each modality. Each encoder/decoder only has a few layers. Therefore, the model complexity of the proposed CmIR is acceptable. To verify our claim, we compare the model complexity of CmIR with competitive baselines on the MUStARD \cite{msd} and CMU-MOSI \cite{zadeh2016multimodal} datasets. As shown in Table~\ref{t_com}, on MUStARD, \textbf{CmIR outperforms competitive baselines with minimal number of parameters}, demonstrating the effectiveness of the proposed framework and indicating that the performance improvement of CmIR dose not results from the increase in the number of parameters.
On CMU-MOSI, we  evaluate the FLOPs and number of parameters of our CmIR. As shown in Table~\ref{t_com}, CmIR has significantly fewer parameters than EMOE and GSCon, slightly more than Diffusion Bridge, while its FLOPs are lower than all baselines, indicating the high efficiency of our method. 

\begin{table*}[t]
\centering
 \caption{ \label{t_test}\textbf{Paired t-test analysis between CmIR and ITHP \cite{ithp}.} }
\resizebox{1.5\columnwidth}{!}{\begin{tabular}{c|c|c|c|c|c|c|c|c|c|c}
 \noalign{\hrule height 1pt} 
\rowcolor{lightgray!40}
 & \multicolumn{5}{c|}{CMU-MOSI} & \multicolumn{5}{c}{CMU-MOSEI}  \\
  \hline
   &  Acc7  & Acc2  & F1  & MAE  & Corr & Acc7  & Acc2  & F1  & MAE  & Corr  \\
 \hline
ITHP & 0.002 & 0.004 & 0.004 & 4.98e-4 & 0.439 & 3.65e-5 & 5.73e-4 & 7.78e-4 & 1.12e-4 & 0.566 \\
 \noalign{\hrule height 1pt} 
 \end{tabular}}
\end{table*}%

\section{Significant Test}

In this section, we provide the results of significant test between the proposed CmIR and a strong baseline ITHP \cite{ithp} using t-test on the CMU-MOSI and CMU-MOSEI datasets. We run each model for ten times under different random seeds to gather the results.
As shown in Table~\ref{t_test}, the t-test results indicate that CmIR has a statistically significant difference with ITHP \cite{ithp} for all evaluation metrics except the Corr metric ($p<0.05$ denotes a statistically significant difference), suggesting that the improvement of CmIR is significant.

\begin{table*}[t]
\centering
 \caption{ \label{t_prob}\textbf{Probing results on CMU-MOSEI.} }
\resizebox{1.5\columnwidth}{!}{\begin{tabular}{c|c|c|c|c|c|c}
 \noalign{\hrule height 1pt} 
\rowcolor{lightgray!40}
Component & Environment Prediction & \multicolumn{5}{c}{Sentiment prediction}  \\
  \hline
   &  MSE  & Acc7  & Acc2 & F1  & MAE  & Corr   \\
 \hline
 $Z^{inv}$  &  2.04  & 55.1  & 87.8 & 87.7 & 0.513  & 0.793   \\
 \hline
$Z^{spu}$ & 0.87 & 41.4 & 62.8 & 48.5 & 0.841 & 0.001 \\
 \noalign{\hrule height 1pt} 
 \end{tabular}}
\end{table*}%

\section{Probing Experiments for $Z^{inv}$ and $Z^{spu}$}
Orthogonality is only a practical proxy for mutual information minimization. To strengthen the evidence that $Z^{inv}$ and $Z^{spu}$ are well separated and contain different information, we have added two probing experiments: (a) training a predictor to predict the environment label from $Z^{inv}$ or $Z^{spu}$ alone. Since the difference between different environments lies in the noise level (noise coefficient $\alpha^{(e)}$), we directly train an additional fusion network and predictor that use $Z^{inv}$ / $Z^{spu}$ to predict the noise coefficient $\alpha^{(e)}$. Finally, we use MSE to evaluate the effectiveness of the prediction (lower accuracy from $Z^{inv}$ indicates better invariance); (b) Predicting the sentiment label from each component separately. As shown in Table~\ref{t_prob}, $Z^{inv}$ yields high MSE in environment prediction but high accuracy in sentiment prediction, while $Z^{spu}$ does the opposite, confirming semantic separation. Notably, $Z^{spu}$ only relies on label bias for prediction (predicting all samples as positive yields an accuracy of 62.8\% which is exactly the accuracy of the prediction from $Z^{spu}$ ), suggesting it contains non-causal information.

\begin{table*}[t]
\centering
 \caption{ \label{t_sens}\textbf{Sensitivity Analysis of Noise Intensity $\alpha^{(1)}$ on CMU-MOSI (OOD).} }
\resizebox{1.5\columnwidth}{!}{\begin{tabular}{c|c|c|c|c|c|c|c|c|c}
 \noalign{\hrule height 1pt} 
\rowcolor{lightgray!40}
$\alpha^{(1)}$ & 1e-8 & 1e-7 & 1e-6 & 1e-5 & 1e-4 & 1e-3 & 1e-2 & 1e-1 & 1  \\
  \hline
 Acc7  &  43.5 & 45.8 & 47.5 & 47.0 & 47.3 & 48.0 & 48.8 & 48.5 & 49.0 \\
 \hline
Acc2 & 79.7 & 82.8 & 84.1 & 83.6 & 84.0 & 84.4 & 83.9 & 84.4 & 84.6 \\
\hline
F1 score & 79.7 & 82.7 & 83.9 & 83.5 & 84.1 & 84.3 & 83.9 & 84.4 & 84.6 \\
 \noalign{\hrule height 1pt} 
 \end{tabular}}
\end{table*}%

\section{Sensitivity Analysis of Noise Intensity}
In this section, we have conducted new experiments on CMU-MOSI (OOD) and provided the sensitivity analysis of noise intensity $\alpha^{(1)}$ in Table~\ref{t_sens}. 
As shown in Table~\ref{t_sens}, when the value of $\alpha^{(1)}$ is small, CmIR performs poorly. This is because when the imposed noise is small, its impact on modality representations is limited, making it difficult to simulate diverse environments for learning effective causal invariant representations. Moreover, when $\alpha^{(1)}$ fluctuates within a large range (from 1e-7 to 1), the model maintains good performance, demonstrating the robustness of CmIR. Additionally, when $\alpha^{(1)}=1$, the model achieves stronger performance than the default setting ($\alpha^{(1)}=0.1$), indicating the performance potential of CmIR (better results can be obtained through more careful hyperparameter tuning). Compared with the number of environments $K$ (see Figure~\ref{9_mosi} (d)), $\alpha^{(1)}$ has a greater impact and should be set to a relatively large value.




\end{document}